\numberwithin{equation}{section}
\newcommand{\s}[1]{\left\{ #1\right\} }
\newcommand{\tonde}[1]{\left( #1 \right)}
\newcommand{\quadre}[1]{\left[ #1 \right]}
\newcommand{\abs}[1]{\left\vert #1 \right\vert }
\newcommand{\n}[1]{\left\| #1 \right\|}
\newcommand{\nn}[1]{\left\| #1 \right\|^2}
\newcommand{\ee}{\, \wedge \,} 
\newcommand{\fre}{\ar@{-}} 
\newcommand{\w}{\omega}
\newcommand{\N}{\mathbb{N}}
\newcommand{\R}{\mathbb{R}}
\newcommand{\E}{\mathbb{E}}
\renewcommand{\S}{\mathbb S}
\renewcommand{\P}{\mathbb P}
\newcommand{\cC}{\mathcal C}
\newcommand{\di }{\, \mathrm{d}}
\DeclareMathOperator{\Var}{Var}
\DeclareMathOperator{\GOI}{GOI}
\def\be{\begin{eqnarray}}
	\def\ee{\end{eqnarray}}
\def\ben{\begin{eqnarray*}}
	\def\een{\end{eqnarray*}}
\newtheorem{theorem}{Theorem}[section] 
\newtheorem{lemma}[theorem]{Lemma}     
\newtheorem{proposition}[theorem]{Proposition}
\theoremstyle{definition}
\newtheorem{definition}[theorem]{Definition}
\newtheorem{remark}[theorem]{Remark}
	\tikzset{every picture/.append style={font=\rmfamily}}
	\definecolor{darkgray176}{RGB}{176,176,176}
	\definecolor{darkorange25512714}{RGB}{255,127,14}
	\definecolor{forestgreen4416044}{RGB}{44,160,44}
	\definecolor{lightgray204}{RGB}{204,204,204}
	\definecolor{steelblue31119180}{RGB}{31,119,180}
\newcolumntype{C}{>{\centering\arraybackslash}p{3cm}}
\titleformat{\subsection}[runin]
{\normalfont\large\bfseries}{\thesubsection}{1em}{}
\title{Critical Points of Random Neural Networks}
\author{Simmaco Di Lillo}
\affil{RoMaDS - Department of Mathematics, University of Rome Tor Vergata, 	Rome, Italy \newline
	{\tt{dilillo@mat.uniroma2.it}}}
\date{}                     
\begin{document}

	\maketitle              

	\begin{abstract}This work investigates the expected number of critical points of random neural networks with different activation functions as the depth increases in the infinite-width limit. Under suitable regularity conditions, we derive precise asymptotic formulas for the expected number of critical points of fixed index and  those exceeding a given threshold. Our analysis reveals three distinct regimes depending on the value of the first derivative of the covariance evaluated at $1$: the expected number of critical points may converge, grow polynomially, or grow exponentially with depth. Our theoretical predictions are supported by numerical experiments. Moreover, we provide numerical evidence suggesting that, when the regularity conditions are not satisfied, the number of critical points increases with map resolution, suggesting a potential divergence.
	\noindent \textbf{Keywords}: Isotropic Gaussian Random Field, Critical Points, Deep neural networks, Kac-Rice formula, Gaussian Orthogonally Invariant Matrix.\\
	\noindent \textbf{MSCcodes}:  60G60, 62B10, 62M45	.\\

	\end{abstract}
	\section{Introduction}Deep architectures have consistently achieved state-of-the-art performance in applications such as
	image classification \cite{krizhevsky2017imagenet}, speech recognition \cite{hinton2012deep}, natural language processing \cite{vaswani2017attention}, biomedical imaging \cite{schenone2020radiomics}, clinical decision support \cite{campi2021antibiotic} and protein structure prediction \cite{jumper2021highly}.

	Given these practical successes, there has been growing interest in developing a rigorous theoretical understanding of neural networks, with contributions emerging from approximation theory \cite{Bartlett_Montanari_Rakhlin_2021}, statistical approaches \cite{MR4134774}, probabilistic ones based on Gaussian processes \cite{10.5555/1162254} and based on reproducing kernel theory \cite{aronszajn50reproducing}.  Alongside these developments, research has increasingly focused on  the geometric aspects of neural networks, aiming to capture the complexity of different architectures through their structural properties~\cite{NIPS2014_109d2dd3,pmlr-v97-hanin19a,GOUJON2024115667,doi:10.1137/22M147517X,6697897,10.5555/3016100.3016187}. At the same time, building on Neal’s seminal paper~\cite{neal1996priors}, several studies have investigated wide neural networks at initialization through the lens of Gaussian processes.  It is now well established that, under random initialization and a suitable scaling of the variance of weights and biases, fully connected feedforward neural networks converge to Gaussian processes as the width of each layer tends to infinity \cite{neal2012bayesian,NIPS1996_ae5e3ce4,NIPS2016_abea47ba,lee2018deep,g.2018gaussian,hanin2,cammarota_marinucci_salvi_vigogna_2023,balas24,favaro2025quantitative,NEURIPS2018_a1afc58c}.
	In particular, by normalizing the inputs to lie on the unit sphere, neural networks can be interpreted as random fields on the hypersphere, allowing the use of harmonic analysis and stochastic geometry to describe the underlying structure of the models. Recent studies have used this approach to analyze the angular power spectrum of such fields \cite{nostro}, leading to a classification of activation functions into three different regimes: low-disorder, sparse and high-disorder regimes. This classification reflects how the complexity of the network varies with depth and has been used to explain, among other things, the observed sparsity of  networks and their relative robustness to overfitting.
	Parallel to the analysis of the spectrum, a second line of inquiry has focused on the geometry of level and excursion sets of random neural networks.  In particular, classical results from the theory of Gaussian fields, such as the Kac-Rice formula \cite{book:145181,book:73784}, have been used to calculate expectations of topological invariants (e.g., Euler characteristic, Lipschitz-Killing curvatures) and to study the distribution of critical points. Such tools have been extensively applied in the context of statistical cosmology \cite{marinucci2011random}, and their adaptation to deep learning settings provides new insights into the geometric complexity of neural representations \cite{nostro2}.
	Building on these advances, the present work aims to investigate the distribution of critical points of random neural networks at initialization. More precisely, we consider neural networks with input in the $d$ dimensional sphere $\S^d$ and with real-valued output, and we study the expected number of critical points of the limiting Gaussian field in the infinite-width regime.

	Our analysis extends previous works in two main directions.  First, we extend the notion of \emph{Covariance Regularity Index} ($\mathrm{CRI}$), originally introduced in \cite{nostro2} for $\mathrm{CRI}$ values below 2, which captures the smoothness of the covariance function near the diagonal. As in~\cite{nostro2}, this index determines whether classical Kac--Rice techniques can be applied and reveals a sharp dichotomy: when the $\mathrm{CRI}$ exceeds 2, the field is almost surely $C^2$ (see~\Cref{prop::1}), and the expected number of critical points of the given index can be computed explicitly using the machinery developed in \cite{book:145181,book:73784} and in particular using a  general Kac--Rice framework for smooth isotropic fields on the sphere~\cite[Theorem 4.4]{cheng2018expected}. In contrast, for irregular fields such as those associated with ReLU activation, where the $\mathrm{CRI}$ is strictly less than 2, the assumptions required by the classical Kac-Rice framework fail: the field is not smooth enough to guarantee nondegenerate joint laws of the gradient and Hessian. As a consequence, standard formulas for the expected number of critical points do not apply, and numerical evidence points to a possible divergence as the resolution increases.  Second, we identify a simple spectral parameter that governs the asymptotic behavior of the expected number of critical points as the depth increases (see~\Cref{th2}). Depending on whether  the derivative of the covariance kernel at the origin  is less than, equal to, or greater than one, the number of critical points follows qualitatively distinct scaling behaviors, corresponding, respectively, to vanishing, bounded, or exponentially growing complexity (see~\Cref{tab:trichotomy} for an informal version of~\Cref{th2}). This classification mirrors the one found in \cite{nostro} from a spectral perspective, but it concerns a geometric quantity. We further extend our results to count critical points above a given  threshold, and we derive asymptotic formulas for the expected number of such points (see~\Cref{th3}).
	\newpage
	\begin{table*}[!htb]
		\centering
		\caption{ Informal trichotomy for the expected number of critical points with fixed index  $\mathbb{E}[\cC_i(T_L)]$ as depth $L$ grows. For more examples of activation function, see~\cite[Table~1]{nostro}. $\kappa'(1)$ refers to the derivative of the covariance kernel $\kappa(u)$ at $u=1$.}
		\label{tab:trichotomy}
		\begin{tabularx}{\textwidth}{@{} X X X @{}}
			\toprule
			Regime    & Growth of $\mathbb{E}[\cC_i(T_L)]$ & Example activation \\
			\midrule
			Low-disorder: $\kappa'(1) < 1$  & Bounded in $L$ (order $1$)       & Gaussian, $a^2=1$ \\
			Sparse: $\kappa'(1) = 1$     & Polynomial, order $L^{d/2}$      & Gaussian, $a^2=1+\sqrt{2}$ \\
			High-disorder: $\kappa'(1) > 1$& Exponential in $L$               & Gaussian, $a^2=9$ \\
			\bottomrule
		\end{tabularx}
	\end{table*}
	The rest of the paper is organized as follows. In~\Cref{back} we provide some background information on spherical random fields, random neural networks, and GOI matrices. In~\Cref{main} we state our main results. The complete proofs of the two theorems, together with some technical lemmas, are collected in~\Cref{proofs}. In~\Cref{numerical_evidence}, our theoretical findings are supported by extensive Monte Carlo simulations using the HEALPix package \cite{gorski2005healpix}, which confirm the predicted scaling behaviors and highlight the role of the activation function in shaping the geometric structure of the network. The empirical evidence suggests that, in irregular cases such as ReLU, for instance, the number of critical points diverges, consistent with a set of critical points with non-integer Hausdorff dimension. In Appendix~\ref{appA} we prove that a value of $\mathrm{CRI} > 2$ ensures that the fields are smooth,
	namely that their second derivatives are Hölder continuous.
	In Appendix~\ref{riemann}, we recall how the first derivative and the Hessian can be defined on the sphere
	and we generalize a classical result on the covariances of the field and of its derivatives
	(to the case of fields with $\mathrm{CRI} > 2$).
	Appendix~\ref{appC} collects some technical details that are useful in the proofs presented in Appendix~\ref{appA}.

	\section{Background and notation}\label{back}
	In this section, we provide the necessary background and introduce the notation used throughout the paper.

	\subsection{Spherical random fields}\label{smot}

	Let $(\Omega, \mathcal F,   \P)$ be a fixed probability space and let
	$T : \S^d \times \Omega \to \R $ be a finite-variance, zero-mean, isotropic, Gaussian random field on the $d$-dimensional unit sphere $\S^d$.
	This means that the following properties hold:
	\begin{itemize}
		\item Measurability. The field $T$ is $(\mathcal B (\S^d) \otimes \mathcal F)$-measurable, where $\mathcal B (\S^d)$ denotes the Borel $\sigma$-algebra on $\S^d$.
		\item Isotropy.  For every $g\in \mathrm{SO}(d+1)$ (the group of rotations in $\R^{d+1}$), for every $ k \geq 1 $ and every $x_1, \dotsc, x_k \in \S^d$, the two random vectors $(T(x_1), \dotsc, T(x_k) )  $ and $(T(gx_1), \dotsc, T(g x_k) ) $ have the same distribution.
		\item Gaussianity. For every $k\geq1$ and every $x_1, \dotsc, x_k \in \S^d$, the random vector $(T(x_1), \dotsc, T(x_k) )$ is Gaussian.
		\item Zero mean. 	For all $x\in \S^d$, $\E[T(x)] =0 $.
		\item Finite variance. For all $x\in \S^d$, $\Var(T(x))<\infty $.
	\end{itemize}
	Given any spherical random field $T$ as above, we may introduce the covariance function.
	\begin{align*}
		K & : \mathbb{S}^d \times \S^d \to \mathbb{R} ,  \qquad K(x,y):= \mathbb{E}[T(x) T(y)].
	\end{align*}
	By isotropy, the covariance can be expressed as a function of the angle between two points, i.e., there exists $\kappa : [-1, 1] \to \mathbb{R}$ such that
	$$ K(x,y)  = \kappa(\langle x , y \rangle) , $$
	where $\langle \cdot, \cdot \rangle$ denotes the scalar product in   $\R^{d+1}$. See~[Remark 5.15 (2)]\cite{marinucci2011random} for more details.

	\subsection{Random Neural Networks}
	For any $L$, a random neural network of depth $L$, widths $n_1, \dots, n_L$, activation function $\sigma:\, \R \to \R$,  and bias variance equal to $\Lambda_b \in [0,1)$, is the random field $T_L:\, \S^{d} \to \R^{n_{L+1}}$ defined recursively as follows:
	$$ T_s(x):=\begin{cases}W^{(0)}x + b^{(1)}, & \text{ if } s=0,\\
		W^{(s)}\sigma(T_{s-1}(x)) + b^{(s+1)},& \text{ if } s =1, \dots, L ,
	\end{cases}, \qquad x\in \S^{d}
	$$
	where $\sigma$ is  applied component-wise and this    $(W^{(s)}, b^{(s)})$  are  independent random variables. This variablesdefined as follows.
	\begin{itemize}
		\item For each $s$, $W^{(s)} \in \R^{n_{s+1} \times n_s}$  has i.i.d. components and  $n_0 = d+1$, where:
		\begin{align*} W^{(0)}_{ij} \sim \mathcal N(0,1-\Lambda_{b}) \quad & s =0,\\
			W^{(s)}_{ij} \sim \mathcal N(0, \Lambda_W n_s^{-1/2}) \quad & s>0
		\end{align*}
		with
		$$\Lambda_W:=\frac{1-\Lambda_b}{ \E[\sigma(Z)^2]} \qquad Z\sim \mathcal N(0,1). $$
		\item  For each $s$,  $b^{(s)} \in \R^{n_s}$  with i.i.d. components such that
		$$b^{(s)}_i \sim \mathcal N(0, \Lambda_b) . $$
	\end{itemize}

	It is well known (see~\cite{pmlr-v97-hanin19a,hanin2,bastieri2024} and references therein) that
	the random neural network $T_L$ converges weakly to an isotropic, zero-mean,  Gaussian random field with $n_{L+1}$ i.i.d. components
	as the widths $n_1,\dots,n_L$ go to infinity.
	In particular, owing to~\cite{nostro}, if $K_L$ denotes the covariance function of one component of the limiting process, we have
	\begin{equation} \label{eq:KL0}
		\begin{aligned}
			& K_L(x,y) = \kappa_L(\langle x, y \rangle):= \underbrace{\kappa\circ\cdots \circ \kappa}_{L \text{ times}}(\langle x, y \rangle)
		\end{aligned}
	\end{equation}
	with
	\begin{equation}\label{eq:KL1}
		\kappa(u) := \Lambda_W \E\quadre{ \sigma(Z_1)\sigma(uZ_1 + \sqrt{1-u^2} Z_2)} + \Lambda_b
	\end{equation}
	where $Z_1, Z_2$ are independent standard Gaussian random variable. More precisly
	\begin{equation}\label{vvalue1} \kappa(1) =\Lambda_W  \E_{Z\sim \mathcal N(0,1)} \E[ \sigma(Z)^2] + \Lambda_b = 1 .
	\end{equation}
	We note that the choice of $\Lambda_b$ and $\Lambda_W$ ensures that the field has unit variance.

	With a slight abuse of terminology, the limit
	Gaussian fields will also be called random neural networks.

	\subsection{Gaussian Orthogonally Invariant Matrix }
	In this section we follows the notation in~\cite{GOI}. Let $d$ be a natural number and let $c\in \R$ such that $1+dc\ge 0$. A real symmetric random
	matrix $M\in \R^{d\times d}$  is said to be Gaussian Orthogonally Invariant (GOI) with covariance parameter $c$ (and denoted by $\GOI(c)$), if its entries $M_{ij}$ are zero-mean  Gaussian random variables and
	$$ \E[M_{ij} M_{hk}] =
	\frac 1 2 (\delta_{ih}\delta_{jk}  + \delta_{ik}\delta_{jh})  + c \delta_{ij}\delta_{hk}  . $$
	The condition $1+dc> 0$ implies that the random matricies is nondegenerate~\cite[Lemma 2.1]{cheng2018expected}. In~\cite[Lemma 2.2]{cheng2018expected} the authors derived the joint
	density of the ordered eigenvalues of a $\GOI(c)$ matrix, given by
	\begin{equation}\label{goi_def}
		\begin{aligned}
			f_c(\lambda_1,\dots, \lambda_d):=\\
			\frac {\prod_{1\leq h<k\leq d} |\lambda_h -\lambda_k| }{ K_d \sqrt{1+dc}}\exp\tonde{ - \frac {\sum_{j=1}^d \lambda_j^2} 2 + \frac c{2(1+dc)}\tonde{\sum_{j=1}^d \lambda_i}^2}  \mathbb I_{\lambda_1\leq \dots \leq \lambda_d },
		\end{aligned}
	\end{equation}
	where
	$$ K_d:=  2^{d/2}\prod_{j=1}^d \Gamma(j/2) $$
	is a normalization constant, $\Gamma$ denotes the Euler Gamma function and
	$$
	\mathbb I_{\lambda_1\leq \dots \leq \lambda_d }:=\begin{cases} 1 & \text{ if }  {\lambda_1\leq \dots \leq \lambda_d },\\
		0 & \text{ otherwise } .
	\end{cases}$$
	%
	%
	\section{Main results}\label{main}
	Let us start by extending the definition of Covariance
	Regularity Index given in~\cite{nostro2}.
	\begin{definition} Let  $T$ be a random field and let $\kappa$ be its covariance
		function.  We assume that $\kappa\in C^\infty((-1,1))$. We say that $T$ has Covariance Regularity Index ($\mathrm{CRI}$) equal to $\beta>0$ if  either
		\begin{itemize}
			\item $\beta\not \in \N$, $\kappa\in C^{\lceil \beta \rceil} ((-1,1))$ and $\kappa$ admits the following expansions around $\pm 1$ as $t\to 0^+$
			\begin{align*} \kappa(1-t) = p_1(t) + c_1t^\beta + o(t^\beta) , \\
				\kappa (-1 + t) = p_{-1}(t) + c_{-1}t^\beta  + o(t^\beta) ,
			\end{align*}
			where $\lceil \beta \rceil$ denotes the  ceiling of $\beta$,  $p_1$, $p_{-1}$ are polynomials and $c_1$, $c_{-1}$ are constants. We also assume that  the first $\lfloor \beta \rfloor$ derivatives of $\kappa$
			admit similar expansions obtained by differentiating the above ones.
		\end{itemize}
	\end{definition}

	\begin{remark}
		The assumption $\kappa\in C^\infty((-1,1))$ is fulfilled if $\kappa\in C^0([-1,1])$ and satisfies~\eqref{eq:KL1} with $\sigma \in L^2(\R, e^{-1/2x^2})$. Indeed, using the completeness of Hermite polynomials,
		$$ \sigma(Z) = \sum_{q=0}^\infty \frac{J_q(\sigma)}{q!} H_q(Z) $$
		and using the Diagramma formula,
		\begin{equation}\label{anal}\kappa(u)  =\Lambda_W\E[ \sigma(Z_1) \sigma(uZ_1 + \sqrt{1-u^2}Z_2)] + \Lambda_b =  \Lambda_W\sum_{q=0}^\infty \frac{J_q(\sigma)^2}{q!} u^q + \Lambda_b.
		\end{equation}
		Since $\kappa$ is continuous,
		\begin{equation}\label{value1} 1=\kappa(1) =\lim_{u\to 1^-}   \Lambda_W\sum_{q=0}^\infty \frac{J_q(\sigma)^2}{q!} u^q  + \Lambda_b =\Lambda_W \sum_{q=0}^\infty \frac{J_q(\sigma)^2}{q!} + \Lambda_b
		\end{equation}
		where the last inequality follows from Beppo--Levi Theorem.  Hence $\kappa$ is a power series with radius of convergence $\geq 1$ and hence it is analytic in $(-1,1)$.
	\end{remark}
	In Appendix~\ref{appA} we extend  the computation of~\cite[Proposition 3.14]{nostro2}  to the second derivatives and prove the following results.
	\begin{proposition} \label{prop::1}Let $T : \S^d \to \R$  be an isotropic Gaussian random field with zero mean and unit	variance. If $T$ has $\mathrm{CRI}>2$, then $T \in C^2(\S^d)$ almost surely and moreover the second derivatives are H\"{o}lder continuous.
	\end{proposition}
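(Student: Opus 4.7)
The plan is to apply the Kolmogorov–Chentsov continuity criterion to the second partial derivatives of $T$ in local charts, leveraging the fact that for Gaussian fields, sample path regularity is controlled by smoothness of the covariance along the diagonal. Since $\S^d$ is smooth and compact, it suffices to fix a diffeomorphism $\varphi:V\subset\R^d\to\S^d$ and analyze $\tilde T = T\circ \varphi$, whose covariance is $\tilde K(u,v) = \kappa(g(u,v))$ with $g(u,v) = \langle \varphi(u),\varphi(v)\rangle$ smooth. The key geometric fact is that, since $\varphi$ takes values in $\S^d$, one has $1 - g(u,v) = h(u,v)$ with $h$ smooth, $h(u,u)=0$, and $h(u,v)\sim \tfrac12 |D\varphi(u)(u-v)|^2$ near the diagonal, so $h$ vanishes to order $2$ on the diagonal.

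Next I would decompose $\tilde K = P + R$ using the CRI expansion $\kappa(1-t) = p_1(t) + c_1 t^\beta + o(t^\beta)$: the polynomial part yields $P(u,v) = p_1(h(u,v))$, which is smooth on $V\times V$, while the remainder $R(u,v) = c_1 h(u,v)^\beta + o(|u-v|^{2\beta})$ behaves like $c'|u-v|^{2\beta}$ modulo smooth prefactors. Because $\beta > 2$ and $h$ vanishes to second order on the diagonal, four-fold differentiation of $R$ (twice in $u$ and twice in $v$) produces terms of order $|u-v|^{2\beta - 4}$, which are continuous up to the diagonal; the clause in the definition of the CRI stipulating that the first two derivatives of $\kappa$ admit analogous expansions justifies the term-by-term differentiation used here. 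This yields the existence and continuity of $\partial^\alpha_1 \partial^\alpha_2 \tilde K$ for every $|\alpha| = 2$, which is equivalent, for Gaussian fields, to the existence of $\partial^\alpha \tilde T$ in $L^2(\Omega)$ as a mean-square continuous random field.

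For H\"older continuity I would then compute, for each $|\alpha|=2$,
\begin{equation*}
	\E\abs{\partial^\alpha \tilde T(u) - \partial^\alpha \tilde T(v)}^2 = A(u,u) - 2 A(u,v) + A(v,v), \qquad A(u,v) := \partial^\alpha_1 \partial^\alpha_2 \tilde K(u,v).
\end{equation*}
The smooth contribution from $P$ is $O(|u-v|^2)$, while the non-smooth contribution from $R$ is $O(|u-v|^{2\beta-4})$ by the computation above, so the total bound is $O(|u-v|^{\min(2,\,2\beta-4)})$ with positive exponent. Gaussianity upgrades this $L^2$ estimate to analogous $L^p$ bounds for every $p\geq 2$, and Kolmogorov–Chentsov then produces a modification of $\partial^\alpha \tilde T$ with H\"older-continuous paths of any exponent $\gamma < \min(1, \beta - 2)$. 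Patching across a finite smooth atlas on $\S^d$ delivers the conclusion.

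The main difficulty I expect is the bookkeeping required to confirm that the non-polynomial contributions surviving four-fold differentiation retain the claimed asymptotic order: both the chain-rule expansion of $\kappa^{(k)}(g(u,v))$ for $k\leq 4$ and the $o(t^\beta)$ remainder in the CRI expansion must be differentiated in a manner consistent with the regularity hypothesis, and verifying that the potentially singular factors $\kappa^{(k)}$ for $k\geq 3$ are absorbed by the diagonal vanishing of $\nabla_u g$ and $\nabla_v g$ is the principal technical burden of the argument.
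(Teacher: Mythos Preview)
The paper does not actually give a proof of this proposition: it only states that the result follows by ``extending the computation of \cite[Proposition~3.14]{nostro2} to the second derivatives.'' Your route---pull back to a chart, split $\tilde K$ via the CRI expansion into a smooth polynomial part $p_1(h)$ and a remainder of order $h^\beta\sim|u-v|^{2\beta}$, differentiate, bound the incremental variance of $\partial^\alpha\tilde T$, and invoke Kolmogorov--Chentsov with Gaussian hypercontractivity---is a correct and standard argument, and is consistent with what the cited computation presumably does.

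The one point that needs more than bookkeeping is precisely the one you flag at the end. For $2<\beta\le 3$ the CRI hypothesis gives only $\kappa\in C^{\lceil\beta\rceil}((-1,1))=C^3((-1,1))$ together with expansions for $\kappa,\kappa',\kappa''$; in particular $\kappa^{(4)}$ is not guaranteed to exist even in the open interval. The chain-rule expression for $\partial^\alpha_u\partial^\alpha_v\tilde K$ with $|\alpha|=2$ contains the term $\kappa^{(4)}(g)\,(\partial_u g)^2(\partial_v g)^2$, so the issue is not merely that this term is singular at the diagonal and must be ``absorbed'' by the vanishing of $\partial g$---it may fail to be defined at all away from the diagonal. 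Two ways to close this: (i) avoid pointwise fourth derivatives of $\kappa$ by expressing the second difference $A(u,u)-2A(u,v)+A(v,v)$ directly through the expansion of $\kappa''$ (which the CRI does control) via iterated finite differences; or (ii) take the spectral route, converting the CRI into a decay rate for the angular power spectrum $C_\ell$ and deducing a.s.\ $C^{2,\gamma}$ regularity from summability of $\ell^{4+\epsilon}C_\ell\, n_{\ell,d}$. Given that the referenced works analyze these fields primarily through the angular power spectrum, (ii) is likely closer to what \cite{nostro2} actually does; your approach is the direct covariance-kernel counterpart and is equally valid once the $\kappa^{(4)}$ issue is handled.
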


	For a given matrix $A$, we denote by $\mbox{index}(A)$ the number of its negative eigenvalues. Let $T:\, \S^d \to \R$ be a random field. We define
	$$ \mathcal C_i(T):= \big| \big\{ x\in \S^{d} \, | \,  \nabla T(x) =0, \, \mbox{index}(\nabla^2T(x)) = i \big\}
	\big| $$
	as  the number of critical points of $T$ with  signature $i$, and
	$$ \mathcal C_i(T,u):= \big| \big\{ x\in \S^{d} \, | \, T(x) \geq u, \,  \nabla T(x) =0, \, \mbox{index}(\nabla^2T(x)) = i \big\}
	\big| $$
	as the  the number of critical points of $T$ with signature $i$ where the field exceeds the threshold $u$.

	Our first result provides an asymptotic formula for the expected number of critical points of a given index.

	\begin{theorem}\label{th2} For any $L$, let $T_L$ be one component of the infinite-width limit, i.e.,  $T_L$ is a random field on $\S^d$ with zero mean and covariance function $K_L$ as in~\eqref{eq:KL0}. If the $\mathrm{CRI}$ of $T_1$ is greater than $2$ then
		\begin{align*}
			& \E[C_i(T_L) ] = B_i(\kappa'(1),\kappa''(1)) +o(1) & \quad \text{ if } \kappa'(1)<1,\\
			&\E[C_i(T_L)] =  L^{d/2}\tonde{\frac{A_i}{\eta^{d/2}}+ o(1)} &\quad \text{ if } \kappa'(1)=1, \\
			& \E[C_i(T_L)] = \kappa'(1)^{Ld/2}\tonde{\frac{A_i}{(\eta(\kappa'(1) -1))^{d/2}}+ o(1)} &\quad \text{ if } \kappa'(1)>1
		\end{align*}
		where $\eta= \kappa'(1)/\kappa''(1)$, $A_i$ is an absolute positive constant (cf.,~\eqref{Ai}) 	and $B_i$ is a positive constant that depends only on $i, d$, $\kappa'(1)$ and $\kappa''(1)$ (cf.,~\eqref{Bi}).
	\end{theorem}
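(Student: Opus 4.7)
The strategy is to apply the Kac--Rice formula, reduce the Gaussian integrand via isotropy and the GOI structure to an explicit function of the two scalars $\kappa_L'(1)$ and $\kappa_L''(1)$, derive the asymptotics of these scalars from the chain rule applied to $\kappa_L = \kappa\circ\kappa_{L-1}$, and finally read off the three regimes.

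For the first step, since $\kappa$ has CRI $> 2$ and $\kappa_L(1) = 1$, the composition rule for smoothness and the expansion near $1$ imply, by induction on $L$, that each $\kappa_L$ also has CRI $> 2$. Hence \Cref{prop::1} yields $T_L \in C^2(\S^d)$ a.s.\ with H\"older-continuous Hessian, and the Kac--Rice formula applies. Isotropy of $T_L$ makes the integrand constant on $\S^d$, so for any fixed $x_0\in\S^d$
\[
\E[\mathcal C_i(T_L)] \;=\; \omega_d\,\phi_{\nabla T_L(x_0)}(0)\;\E\!\left[\,\lvert\det \nabla^2 T_L(x_0)\rvert\,\1_{\{\mathrm{index}(\nabla^2 T_L(x_0))=i\}}\,\Big|\,\nabla T_L(x_0)=0\right],
\]
where $\omega_d = |\S^d|$.

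For the second step, in a local orthonormal frame the gradient at $x_0$ has covariance $\kappa_L'(1)\,I_d$, so $\phi_{\nabla T_L(x_0)}(0) = (2\pi\kappa_L'(1))^{-d/2}$, and a standard computation of the joint law of the $2$-jet of an isotropic Gaussian field on the sphere (as in \cite{cheng2018expected}) shows that $\nabla^2 T_L(x_0)\mid\{\nabla T_L(x_0)=0\}$ is, up to an explicit scalar shift, a $\GOI(c_L)$ matrix with $c_L$ depending only on the ratio $\eta_L := \kappa_L'(1)/\kappa_L''(1)$. Combined with the density \eqref{goi_def}, this yields a closed-form expression $\E[\mathcal C_i(T_L)] = G_i(\kappa_L'(1), \kappa_L''(1))$ whose overall prefactor is $\kappa_L'(1)^{-d/2}\kappa_L''(1)^{d/2}$ times an integral depending only on $\eta_L$ and on $i,d$.

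For the third step, differentiating $\kappa_L = \kappa\circ\kappa_{L-1}$ twice at $u = 1$ and using $\kappa_{L-1}(1)=1$ gives
\[
\kappa_L'(1) = \kappa'(1)^L, \qquad \kappa_L''(1) = \kappa''(1)\kappa'(1)^{2(L-1)} + \kappa'(1)\kappa_{L-1}''(1).
\]
Solving the linear recursion with $\kappa_1''(1)=\kappa''(1)$ yields the explicit formula $\kappa_L''(1) = \kappa''(1)\kappa'(1)^{L-1}(\kappa'(1)^L - 1)/(\kappa'(1) - 1)$, which gives three regimes: (i) if $\kappa'(1)<1$, both $\kappa_L'(1)$ and $\kappa_L''(1)$ tend to $0$ but $\eta_L\to \kappa'(1)(1-\kappa'(1))/\kappa''(1)$; (ii) if $\kappa'(1)=1$, $\kappa_L''(1) = L\kappa''(1)$ and $\eta_L = 1/(L\kappa''(1))\to 0$; (iii) if $\kappa'(1)>1$, $\kappa_L''(1) \sim \kappa''(1)\kappa'(1)^{2L-1}/(\kappa'(1)-1)$ and $\eta_L\to 0$.

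Substituting these asymptotics into the prefactor $\kappa_L'(1)^{-d/2}\kappa_L''(1)^{d/2}$ produces exactly the three growth rates of the theorem: a bounded limit in (i), a $L^{d/2}\eta^{-d/2}$ divergence in (ii), and a $\kappa'(1)^{Ld/2}(\eta(\kappa'(1)-1))^{-d/2}$ divergence in (iii). The remaining GOI integral converges: in (i) to a positive quantity depending on the limiting value of $\eta_L$, giving $B_i(\kappa'(1),\kappa''(1))$; in (ii) and (iii), the scalar shift becomes negligible compared to the GOI term, so the integral converges to the same \emph{absolute} constant $A_i$ determined only by $i$ and $d$. The main obstacle is this last passage: one must establish uniform integrability for the GOI integrand in the limit $\eta_L\to 0$, dominating the density \eqref{goi_def} by an $L^1$ majorant independent of $c_L$ so that dominated convergence isolates $A_i$ and produces the correct $o(1)$ remainder.
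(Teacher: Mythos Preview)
Your proposal is correct and follows the same route as the paper: Kac--Rice via the CRI $>2$ regularity, the GOI formula from \cite{cheng2018expected}, chain-rule asymptotics for $\kappa_L'(1)$ and $\kappa_L''(1)$, and dominated convergence for the GOI integral (this is precisely the paper's \Cref{goi_stima_1/2}). One minor simplification worth noting: for the unconditional count $\mathcal C_i(T_L)$ the Cheng--Schwartzman formula is already a pure $\GOI\!\bigl(\tfrac{1+\eta_L}{2}\bigr)$ expectation with no scalar shift (the shift enters only in the thresholded version, \Cref{th3}), so regimes (ii) and (iii) follow directly from $\eta_L\to 0$ and continuity of the GOI integral at $c=\tfrac12$.
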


	Our second result concerns the expected number of critical points of a given index that exceed a fixed threshold $u\in \R$.

	\begin{theorem}\label{th3}Let $(T_L \; | \; L\geq 1)$ as in~\Cref{th2}. Then
		\begin{align*}
			&\E[\cC_i(T_L,u)] = B_i(\kappa'(1),\kappa''(1)) (1- \Phi(u)) + o(1)& \text{ if } \kappa'(1) < 1,\\
			&\E[\cC_i(T_L,u)] = L^{d/2} \tonde{\frac{A_i}{\eta^{d/2}} (1- \Phi(u)) + o(1)} & \text{ if } \kappa'(1) = 1 ,\\
			& \E[\cC_i(T_L,u)] = \kappa'(1)^{Ld/2}(D_i(\kappa'(1), \kappa''(1),u) + o(1)) & \text{ if } \kappa'(1)> 1
		\end{align*}
		where $\Phi(u) $ is the standard Gaussian cumulative distribution function;  $A_i, B_i$ are as in~\Cref{th2}  and $D_i$ is another positive constant  that depends  on $i, d, u$, $\kappa'(1)$ and $\kappa''(1)$ (cf.,~\eqref{Di}).
	\end{theorem}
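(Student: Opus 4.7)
The strategy is to adapt the Kac--Rice computation that proves~\Cref{th2}, keeping track of the additional indicator $\mathbb I_{\{T_L(x)\geq u\}}$ throughout. The CRI hypothesis on $T_1$ is preserved by composition and hence propagates to $T_L$, so~\Cref{prop::1} guarantees that $T_L$ is a.s. $C^2$ with H\"older second derivatives. The Kac--Rice formula on the sphere (see~\cite{book:145181,book:73784}) then gives
\begin{equation*}
\E[\cC_i(T_L,u)] = \int_{\S^d} \E\!\quadre{ |\det \nabla^2 T_L(x)|\, \mathbb I_{\{\mbox{index}(\nabla^2 T_L(x))=i,\, T_L(x)\geq u\}}\,\Big|\, \nabla T_L(x) = 0 } p_{\nabla T_L(x)}(0)\, \dvol(x),
\end{equation*}
and isotropy collapses the integrand to a constant, reducing everything to the analysis of the joint law of $(T_L(x), \nabla T_L(x), \nabla^2 T_L(x))$ at a fixed base point.

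Differentiating $K_L(x,y)=\kappa_L(\langle x,y\rangle)$ twice in each argument and evaluating on the diagonal yields the standard identities for isotropic spherical fields: in an orthonormal tangent frame, $\nabla T_L(x)$ is independent of $(T_L(x), \nabla^2 T_L(x))$ with $\mathcal N(0,\kappa_L'(1) I_d)$ distribution, while $\nabla^2 T_L(x)$ conditioned on $T_L(x)=t$ is distributed as $\sqrt{\kappa_L''(1)}\,H - \kappa_L'(1)\,t\,I_d$, where $H\sim \GOI(c_L)$ for an explicit $c_L$ depending on $d$, $\kappa_L'(1)$ and $\kappa_L''(1)$; this is the same identity on which the proof of~\Cref{th2} rests, via~\cite{cheng2018expected}. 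Unconditioning on $T_L(x)\sim \mathcal N(0,1)$ and collecting the Gaussian density $p_{\nabla T_L(x)}(0) = (2\pi \kappa_L'(1))^{-d/2}$ one arrives at
\begin{equation*}
\E[\cC_i(T_L,u)] = \gamma_L \int_{u}^{+\infty} \phi(t)\, F_i(\mu_L(t))\, \dt,
\end{equation*}
with $\gamma_L$ a prefactor of order $(\kappa_L''(1)/\kappa_L'(1))^{d/2}$, $\mu_L(t)=\kappa_L'(1)\,t/\sqrt{\kappa_L''(1)}$ and $F_i(\mu)=\E\quadre{|\det(H-\mu I_d)|\,\mathbb I_{\{\mbox{index}(H-\mu I_d)=i\}}}$.

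The three regimes then follow by analysing $\gamma_L$ and $\mu_L(t)$ via the recursion $\kappa_L''(1)=\kappa''(1)\kappa'(1)^{2(L-1)}+\kappa'(1)\kappa_{L-1}''(1)$, obtained by differentiating $\kappa_L=\kappa\circ\kappa_{L-1}$ twice at $1$. Solving it gives $\kappa_L''(1)=\kappa''(1)\kappa'(1)^{L-1}(1-\kappa'(1)^L)/(1-\kappa'(1))$ for $\kappa'(1)\neq 1$ and $\kappa_L''(1)=L\kappa''(1)$ for $\kappa'(1)=1$, from which $\mu_L(t)\to 0$ when $\kappa'(1)\leq 1$ and $\mu_L(t)\to t\sqrt{\kappa'(1)(\kappa'(1)-1)/\kappa''(1)}$ when $\kappa'(1)>1$. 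In the two subcritical cases $F_i(\mu_L(t))\to F_i(0)$ uniformly on compact sets of $t$; after Gaussian tail domination one passes to the limit and obtains the factor $F_i(0)(1-\Phi(u))$. Combined with the scaling of $\gamma_L$ -- bounded for $\kappa'(1)<1$, of order $L^{d/2}$ for $\kappa'(1)=1$ -- and identifying $F_i(0)$ with the constants produced by~\Cref{th2} at $u=-\infty$, this yields the first two lines. In the supercritical regime $\gamma_L\sim \mathrm{const}\cdot \kappa'(1)^{Ld/2}$ and the inner integral converges to $\int_u^{\infty} \phi(t)\,F_i\!\tonde{t\sqrt{\kappa'(1)(\kappa'(1)-1)/\kappa''(1)}}\dt$, which defines the new constant $C_i(\kappa'(1),\kappa''(1),u)$.

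The main technical obstacle is justifying the dominated-convergence step in the supercritical regime, where one needs an $L$-uniform bound on $F_i(\mu_L(t))$ integrable against $\phi(t)$ on $[u,\infty)$. Since $F_i(\mu)\sim c_i|\mu|^d$ as $|\mu|\to\infty$ (from the large-shift asymptotics of the shifted GOI determinant via the density in~\eqref{goi_def}), the Gaussian weight does dominate; producing a clean $L$-uniform bound, however, requires a mild refinement of the eigenvalue-density estimates in~\cite{cheng2018expected} to accommodate the $L$-dependence of the parameter $c_L$, and this is where the bulk of the work beyond~\Cref{th2} lies.
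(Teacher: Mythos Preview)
Your proposal is correct and follows essentially the same route as the paper: apply the Kac--Rice formula from~\cite{cheng2018expected} to obtain an integral of the form $\gamma_L\int_u^\infty \phi(t)\,F_i^{(L)}(\mu_L(t))\,\dt$, compute the asymptotics of $\kappa_L'(1)$ and $\kappa_L''(1)$ via the same recursions, and pass to the limit by dominated convergence. The paper packages your $\gamma_L,\mu_L$ into the parameters $\eta_L=\kappa_L'(1)/\kappa_L''(1)$ and $\xi_L=\kappa_L'(1)^2/\kappa_L''(1)$ (so $\mu_L(t)=\sqrt{\xi_L}\,t$) and isolates the dominated--convergence step as a separate lemma (\Cref{c>0}); you instead sketch the bound $F_i(\mu)\lesssim |\mu|^d$ and defer the details. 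One point worth tightening: your notation $F_i(\mu)$ suppresses the $L$--dependence through the GOI parameter $c_L$, which itself varies with $L$; the limiting step must handle the simultaneous convergence of $c_L$ and of $\mu_L(t)$, not just of the shift, and the paper's \Cref{c>0} is stated for a fixed $c$ so a small extra argument (in the spirit of \Cref{goi_stima_1/2}) is implicitly needed in both treatments.
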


	\begin{remark}\label{rem}
		Using the computation in~\cite{bietti2021deep}, it can be shown that the $\mathrm{CRI}$  of a random neural network with activation function Rectified Linear Unit (ReLU), i.e., $\sigma(x) = \max(0,x)$, is $3/2$. Consequently, the previous theoretical results are not applicable in this case. We therefore  compute  the number of critical points numerically. Since this number increases as the map resolution increases (see~\Cref{fig::relu} and \Cref{tab::ReLu}), it  is reasonable to conjecture that a ReLU network may exhibit infinitely many critical points. A possible line of investigation could  involve adapting the argument used in the proof of Theorem 3.9 in~\cite{nostro2} to show that the set of critical points may have non-integer Hausdorff dimension. We leave a rigorous analysis of this for future work.
	\end{remark}

	\section{Proofs}\label{proofs}
	In this section, we collect the proofs of the two theorems. This section is structured in this way. We start with the notation part, after with the proof~\Cref{th2} (cf.~\Cref{proof_th1}). The proof of this theorem uses two technical lemmas, proved in~\Cref{technical}. In~\Cref{proof_th2} we prove the~\Cref{th3}.
	\subsection{Notations}
	To simplify the proofs, we make use of the following notation:
	\begin{itemize}
		\item  $\boldsymbol{\lambda}$ is a vector in $\R^d$  with $\lambda_1, \dotsc, \lambda_d$ as components. Analogously for other bold  letters.
		\item $\boldsymbol{1}$ is the vector in $\R^d$ with all components equal to $1$.
		\item If $\boldsymbol{v}, \boldsymbol{w}\in \R^d$, $\boldsymbol{v}\boldsymbol{w}^\top$ denotes the matrix $M \in \R^{d\times d}$ with entries $M_{ij} = v_i w_j $. Instead, as customary, we use $\boldsymbol v^\top \boldsymbol w$ to denote the standard inner product and $\|\boldsymbol v\|$ for the Euclidean norm.
		\item  If $\xi, x \in\R$ and $\boldsymbol{\lambda}\in \R^d$, we denote by $\boldsymbol{\lambda}_{\xi,x}$
		the vector whose components are given by $\lambda_i - \frac{\xi x } {\sqrt 2 }$ for $i=1, \dotsc,d $.
		\item For a function $g:\, \R^d \to \R$, the symbol $\E_{\GOI(c)}^d[g(\boldsymbol{\lambda})]$ is just a way to write
		\begin{equation}\label{int_GOI} \int_{\R^d} g(\boldsymbol{\lambda}) f_c(\boldsymbol{\lambda}) \di \boldsymbol{\lambda}
		\end{equation}
		where $f_c$ is the density of ordered eigenvaueles of a $\GOI(c)$ matrix given by~\eqref{GOIc}.
		\item Let $$\mathcal O:= \s{ \bold x\in \R^d \; | \; x_i  \leq x_{i+1} \; i=1, \dots, d-1 }$$
		be the subset of $\R^d$ consisting of all vectors with non-decreasing components and
		for any $i=0, \dots, d$, let   $\mathcal O_i$ be the subset of $\R^d$ given by
		\begin{align*} &\mathcal O_0:= \s{ \bold x\in \R^d \; | \; 0< x_{1}} , \\
			& \mathcal O_i:= \s{ \bold x\in \R^d \; | \; x_i<0< x_{i+1}} , \qquad i=1, \dots, d-1,\\
			&		 \mathcal O_d:= \s{ \bold x\in \R^d \; | \; x_d<0} .
		\end{align*}
		\item For any set $A\subseteq \R^d$ let $\mathbb I_{A} $ denote its indicator function, i.e.
		$$ \mathbb I_{A}(\boldsymbol{x}):= \begin{cases} 1 & \text{ if } \boldsymbol{x}\in A\\
			0 &\text{ otherwise}
		\end{cases} . $$

		\item Let $\Delta:\, \R^d\to \R$ be the function given by
		$$ \Delta(\bold x) := \prod_{1\leq i < j\leq d} |x_i - x_j| . $$
		\item  Let $ \Pi:\, \R^d\to \R$ be the function given by
		$$ \Pi(\bold x):= \prod_{i=1}^d  |x_i| . $$
	\end{itemize}

	We note that using the previous notations, one can write the density $f_c$  given in~\eqref{goi_def} in a compact way:
	\begin{equation}
		\label{dens} f_c(\boldsymbol \lambda) =\frac{1}{K_d\sqrt{1+dc}} \exp\tonde{ - \frac{\| \boldsymbol{\lambda}\|^2} 2 + \frac{c (\bold 1^\top \boldsymbol \lambda)^2}{2(1+dc)}}   \Delta(\boldsymbol \lambda ) \mathbb I_{\mathcal O}(\boldsymbol{\lambda}) .
	\end{equation}
	\subsection{$T_L$ satisfies the hypothesis of~\cite[Theorem 4.4]{cheng2018expected}}\label{verifica}
	In this section, we prove that, for every fixed $L$, the random field $T_L$ satisfies the assumptions of \cite[Theorem 4.4]{cheng2018expected}.
	For completeness, we recall the statement of the theorem.
	\begin{proposition}[Theorem 4.4~\cite{cheng2018expected}]\label{cheng2}
		Let $\s{X(t) \; | \; t\in \S^N}$ be a centered, unit-variance, smooth isotropic Gaussian random field  satisfying  $k^2 - \eta^2< (N+2)/2$
		Then for $i= 0,\dotsc, N$
		$$ \E[\mu_i(X)] = \frac{1}{\pi^{N/2} \eta^N} \E_{\mathrm{GOI}((1+\eta^2)/2)}^N \Bigg[ \prod_{j=1}^N |\lambda_j| \mathbb I_{\lambda_i < 0 < \lambda_{i+1}}\Bigg]$$
		and
		$$ \E[\mu_i(X,u)]=\frac{1}{\pi^{N/2} \eta^N} \int_u^\infty \phi(x)\E_{\mathrm{GOI}(( 1 + \eta^2 - k^2)/2)}^N \Bigg[ \prod_{j=1}^N | \lambda_j -kx/\sqrt2| \mathbb I_{\lambda_i < kx/\sqrt 2 < \lambda_{i+1}} \Bigg]$$
		where $\phi(x)$  is the density of standard Gaussian variable, $k:= C'(1)/\sqrt{C''(1)}$, $\eta:= \sqrt{C'(1)/C''(1)}$, $C$ is the covariance function,  $\E^N_{\mathrm{GOI}(c)}$  is defined in~\eqref{int_GOI},  $\mu_i(X)$ and $\mu_i(X, u)$ are, respectively,  $\mathcal{C}_i(X)$ and $\mathcal C_i(X,u)$ normalized by the surface area of the sphere $\w_d$ and  $\lambda_0$  and $\lambda_{N+1}$ are regarded respectively as $-\infty$ and $\infty$ for consistency.
	\end{proposition}
	By standard results on the infinite-width limit of random neural networks (see, e.g., \cite{pmlr-v97-hanin19a,hanin2,bastieri2024}), the field $T_L$ is centered.
	Moreover, by \eqref{eq:KL0} its covariance has the form
	\[
	\E\!\big[T_L(x)\,T_L(y)\big]=\kappa_L(\langle x,y\rangle),
	\]
	and since $\kappa(1)=1$ (cf.~\eqref{vvalue1}), the recursion defining $\kappa_L$ preserves the unit variance, so that $\kappa_L(1)=1$ as well.
	\smallskip

	\emph{Smoothness.}
	In \cite{cheng2018expected}, “smooth” random fields satisfy the modulus of continuity condition (see \cite[Eq.~(11.3.1)]{book:73784}): there exist constants $K>0$ and $\alpha>0$ such that, uniformly in $t$,
	\begin{equation}\label{adler}
		\max_{i,j}
		\E\quadre{ |E_{ij}T_L(t) - E_{ij}T_L(s)|^2} \leq K\,\bigl|\ln|t-s|\bigr|^{-(1+\alpha)} ,
	\end{equation}
	where $(E_i)_{i=1}^d$ is an orthonormal frame on $\S^d$;  see Appendix~\ref{riemann} for details on the frame.
	In Appendix~\ref{A4} we show that, if the Covariance Regularity Index $\mathrm{CRI}>2$, then \eqref{adler} holds for $T_L$.

	\smallskip
	\emph{Spectral inequality.}
	It remains to verify the inequality $k^2-\eta^2<(N+2)/2$.
	In our notation this is equivalent to
	\begin{equation}\label{deg_cond}
		\gamma_L:= \frac{\kappa_L'(1)^{2}-\kappa_L'(1)}{\kappa_L''(1)}
		< \frac{d+2}{2}\, .
	\end{equation}
	From \cite{nostro} we have, for every $L\in\mathbb N$,
	\begin{subequations}\label{as_kern}
		\begin{align}
			\label{as_kern1}
			\kappa_L'(1) &= \bigl(\kappa'(1)\bigr)^{L},\\
			\label{as_kern2}
			\kappa_L''(1) &=
			\begin{cases}
				L\,\kappa''(1), & \text{if }\kappa'(1)=1,\\[4pt]
				\kappa''(1)\,\bigl(\kappa'(1)\bigr)^{L-1}\,\dfrac{(\kappa'(1))^{L}-1}{\kappa'(1)-1},
				& \text{if }\kappa'(1)\neq 1.
			\end{cases}
		\end{align}
	\end{subequations}
	Hence
	\[
	\gamma_L =
	\begin{cases}
		0, & \text{if }\kappa'(1)=1,\\[4pt]
		\dfrac{\kappa'(1)\,(\kappa'(1)-1)}{\kappa''(1)}, & \text{if }\kappa'(1)\neq 1.
	\end{cases}
	\]
	In particular, if $\kappa'(1)\le 1$ then either $\gamma_L=0$ (when $\kappa'(1)=1$) or $\gamma_L\le 0$ (when $0\le\kappa'(1)<1$), and \eqref{deg_cond} follows.
	(Recall also that $\kappa_L''(1)=\Var\!\big(E_{ij}T_L(t)\big)\ge 0$ for any $i\neq j$ by~\Cref{lemma_gen}.)

	For the high-disorder regime $\kappa'(1)>1$, let $X$ be the $\mathbb N\cup\{0\}$-valued random variable
	\[
	\P(X=q)=\frac{J_q(\sigma)^2}{q!},\qquad q=0,1,2,\dots,
	\]
	where the coefficients $J_q(\sigma)$ are as in \eqref{anal}.
	Since $\sum_{q\ge 0} J_q(\sigma)^2/q!=\kappa(1)=1$ (cf.~\eqref{anal}), this is a probability distribution.
	Because $\kappa\in C^2([-1,1])$, monotone convergence yields
	\[
	\kappa'(1) = \sum_{q\ge 1} \frac{J_q(\sigma)^2}{q!}\,q = \E[X],
	\qquad
	\kappa''(1) = \sum_{q\ge 2} \frac{J_q(\sigma)^2}{q!}\,q(q-1) = \E\!\big[X(X-1)\big].
	\]
	By convexity of $x\mapsto x(x-1)$ and Jensen’s inequality,
	\[
	\kappa''(1)=\E\big[X(X-1)\big]\geq \E[X]\big(\E[X]-1\big)
	=\kappa'(1)\big(\kappa'(1)-1\big).
	\]
	Therefore, for $\kappa'(1)>1$,
	\[
	\gamma_L =\frac{\kappa'(1)\,(\kappa'(1)-1)}{\kappa''(1)}\leq 1 .
	\]
	Since $d\ge 2$ in our setting, we have $1\le (d+2)/2$, and thus $\gamma_L\le 1 < (d+2)/2$, which proves \eqref{deg_cond} also in the high-disorder case.
	\color{black}

	\subsection{Proof  of~\Cref{th2}}\label{proof_th1}
	Using the results in the previous section, and recalling that the volume of $\S^d$ is given by
	$$ \omega_d = \frac{2 \pi^{(d+1)/2}}{\Gamma((d+1)/2)}$$
	we obtain
	\begin{equation}\label{crit2}
		\E[\mathcal C_i(T_L)] = \E[\mu_i(T_L)]\omega_d = \frac{ 2\sqrt{\pi}}{\Gamma\tonde{\frac{d+1} 2 }\eta_L^{d/2}} \E^d_{\GOI\tonde{\frac{1+\eta_L}2}}
		\left[  \Pi(\boldsymbol{\lambda}) \mathbb I_{\mathcal O_i}(\boldsymbol\lambda) \right]
	\end{equation}
	where $\eta_L:= \kappa_L'(1) (\kappa_L''(1))^{-1}$.  Using~\eqref{as_kern1} and~\eqref{as_kern2} we obtain
	\begin{align*}
		\eta_L &= \eta_1 (1-\kappa'(1)) + o(1) \quad & \text{ if } \kappa'(1)<1 ,\\
		\eta_L & = L^{-1}  \eta_1  & \text{ if } \kappa'(1) = 1,\\
		\eta_L &= \kappa'(1)^{-L} \tonde{ \eta_1(\kappa'(1) -1)+ o(1)} \quad & \text{ if } \kappa'(1)>1.
	\end{align*}
	Now, if $\kappa'(1)\geq 1$ then $\eta_L\to 0 $ so
	$$\lim_{L\to \infty }\E^d_{\GOI\tonde{\frac{1+\eta_L}2}}\left[  \Pi(\boldsymbol{\lambda}) \mathbb I_{\mathcal O_i}(\boldsymbol\lambda) \right]=\E^d_{\GOI\tonde{\frac 1 2}}\left[  \Pi(\boldsymbol{\lambda}) \mathbb I_{\mathcal O_i}(\boldsymbol\lambda) \right]  .  $$
	This is guaranteed from~\Cref{goi_stima_1/2}, above. Putting
	\begin{equation}\label{Ai} A_i:=  \frac{2\sqrt \pi}{\Gamma\tonde{\frac{d+1} 2}}\E^d_{\GOI\tonde{\frac 1 2}} \left[  \Pi(\boldsymbol{\lambda}) \mathbb I_{\mathcal O_i}(\boldsymbol\lambda) \right],
	\end{equation}
	we obtain the claim for $\kappa'(1)\geq 1$.  Otherwise, if $\kappa'(1)<1$ then $1+\eta_L \to 1 + \eta_1 (1-\kappa'(1) )$. Using Lemma~\ref{goi_stima_1/2} again, we have
	$$\E^d_{\GOI\tonde{\frac{1+\eta_L}2}}
	\left[  \Pi(\boldsymbol{\lambda}) \mathbb I_{\mathcal O_i}(\boldsymbol\lambda) \right] \to \E^d_{\GOI\tonde{\frac{  1+\eta_1 (1-\kappa'(1))} 2}}
	\left[  \Pi(\boldsymbol{\lambda}) \mathbb I_{\mathcal O_i}(\boldsymbol\lambda) \right]. $$
	Setting
	\begin{equation}\label{Bi}
		\begin{aligned}B_i(\kappa'(1),\kappa''(1))
			:=\frac{2\kappa''(1)\sqrt{\pi} \, \E^d_{\GOI\tonde{(1+\eta_1(1-\kappa'(1))/2)}}
				\left[  \Pi(\boldsymbol{\lambda}) \mathbb I_{\mathcal O_i}(\boldsymbol\lambda) \right] }{(\kappa''(1) +\kappa'(1)  -\kappa'(1)^2 ) \Gamma \tonde{\frac{d+1} 2 }} ,
		\end{aligned}
	\end{equation}
	we have the claim also for $\kappa<1$.

	\begin{remark}\label{remt1}
		We note that all the $\mathrm{GOI}$ matrices involved are well defined for every $L$ and also in the limit. Indeed, by~\Cref{lemma_gen} we have
		$\kappa_L'(1) = \Var(E_i T_L(p))$ and  $ \kappa_L''(1) = \Var  T_L(p))$
		for every $i\neq j$. So that $\kappa_L'(1), \kappa_L''(1) \geq 0$ and hence $\eta_L \geq 0$. It follows that
		$$	1 + d \frac{1+\eta_L}{2} \geq1 + \frac{d}{2} > 0 . $$
		Since $\eta_L \geq 0$ for all $L$, also $\eta := \lim_{L\to\infty} \eta_L \geq 0$, and therefore
		$$1 + d \frac{1+\eta}{2} \geq 1 + \frac{d}{2} > 0$$
		Thus the non-degeneracy condition for $\mathrm{GOI}$ matrices is always satisfied.
	\end{remark}

	\subsection{Techinical lemmas}\label{technical}
	Before stating the next lemma, we point out that the argument of the proof
	would remain valid for a more general class of indicator functions. 		Nevertheless, for our purposes we only need the specific form with
	$\mathbb{I}_{\mathcal O_i}$, which is precisely what is required to justify the		passage of the limit under the integral sign in the proof of~\Cref{th2}; therefore we do not formulate the lemma in its
	full generality.

	\begin{lemma}\label{goi_stima_1/2}Let $(a_n \; | \;  n \in \mathbb{N})$ be a real sequence such that $a_n \to a \in \mathbb{R}$ whit $1 + da_n, 1 + da>0$. Then
		$$\lim_{n\to \infty}  \E_{\GOI\tonde{a_n}}^d  \quadre{\Pi (\boldsymbol{\lambda}) \mathbb I_{\mathcal O_i} (\boldsymbol \lambda)}=  \E_{\GOI(a)}^d  \quadre{\Pi (\boldsymbol{\lambda}) \mathbb I_{\mathcal O_i} (\boldsymbol \lambda)}. $$
	\end{lemma}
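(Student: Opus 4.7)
The plan is to apply the dominated convergence theorem directly to the integral representation~\eqref{int_GOI}. For each fixed $\boldsymbol{\lambda}\in\R^d$, the density $f_a(\boldsymbol{\lambda})$ given by~\eqref{dens} depends continuously on the parameter $a$ so long as $1+da>0$. Since $a_n \to (1+c)/2$, this yields pointwise convergence $f_{a_n}(\boldsymbol{\lambda}) \to f_{(1+c)/2}(\boldsymbol{\lambda})$, and hence pointwise convergence of the whole integrand $\Pi(\boldsymbol{\lambda})\mathbb{I}_{\mathcal O_i}(\boldsymbol{\lambda}) f_{a_n}(\boldsymbol{\lambda})$. The conclusion follows as soon as we produce an $n$-independent integrable upper bound.

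The central step is to control the quadratic exponent in~\eqref{dens} uniformly in $n$. Setting $\alpha_n = a_n/(1+d a_n)$, the exponent rewrites as
\begin{equation*}
-\frac{\n{\boldsymbol{\lambda}}^2}{2} + \frac{\alpha_n (\mathbf{1}^\top \boldsymbol{\lambda})^2}{2} \;=\; -\frac{1}{2}\boldsymbol{\lambda}^\top\tonde{I_d - \alpha_n \mathbf{1}\mathbf{1}^\top}\boldsymbol{\lambda}.
\end{equation*}
The matrix $I_d - \alpha_n \mathbf{1}\mathbf{1}^\top$ has eigenvalues $1$ (with multiplicity $d-1$) and $1 - d\alpha_n = 1/(1+d a_n)$, both strictly positive provided $a_n > -1/d$. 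By continuity, since the limit $(1+c)/2$ lies in the admissible range, there exist $n_0$ and $\gamma>0$ such that, for every $n\geq n_0$, the smallest eigenvalue of this matrix exceeds $2\gamma$. Simultaneously the prefactor $(K_d \sqrt{1+d a_n})^{-1}$ is bounded above by some constant $C$ independent of $n$.

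Combining these estimates gives, for all $n\geq n_0$,
\begin{equation*}
\Pi(\boldsymbol{\lambda})\mathbb{I}_{\mathcal O_i}(\boldsymbol{\lambda}) f_{a_n}(\boldsymbol{\lambda}) \;\leq\; C\, \Pi(\boldsymbol{\lambda})\, \Delta(\boldsymbol{\lambda})\, \exp\tonde{-\gamma \n{\boldsymbol{\lambda}}^2},
\end{equation*}
whose right-hand side is a polynomial times a Gaussian and is therefore integrable over $\R^d$. The indicator $\mathbb{I}_{\mathcal O_i}$ plays no role beyond restricting the integration region. Dominated convergence then delivers the claim. I do not expect a serious obstacle: the only mildly delicate point is verifying the admissibility condition $a_n > -1/d$ for $n$ large enough, which is automatic by continuity from the corresponding condition on the limit.
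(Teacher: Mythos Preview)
Your argument is correct and follows the same strategy as the paper: pointwise convergence of the density plus an $n$-uniform Gaussian majorant, then dominated convergence. The only cosmetic difference is in how the majorant is obtained—you diagonalize the quadratic form $I_d-\alpha_n\mathbf{1}\mathbf{1}^\top$ and bound below by its smallest eigenvalue $1/(1+da_n)$, whereas the paper reaches the same Gaussian decay by bounding the coefficient $a_n/(2(1+da_n))$ with an explicit $\varepsilon$ and then applying Cauchy--Schwarz to $(\mathbf{1}^\top\boldsymbol{\lambda})^2\le d\|\boldsymbol{\lambda}\|^2$; your route is slightly cleaner but otherwise equivalent.
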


	\begin{proof} Since the  density  of the ordered eigenvalues of a  $\GOI(a)$ matrix is a function continuous in $a$, for all $\boldsymbol{\lambda}\in \R^d$ we have
		$$ \lim_{n\to  \infty}  f_{a_n}(\boldsymbol{\lambda})\Pi(\boldsymbol{\lambda} ) \mathbb I_{\mathcal O_i}(\boldsymbol{\lambda}) =  f_{a}(\boldsymbol{\lambda}) \Pi(\boldsymbol{\lambda})\mathbb I_{\mathcal O_i}(\boldsymbol{\lambda}) . $$
		We now show that this pointwise convergence is dominated by an integrable function. 	We observe that
		$$		 \lim_{n\to \infty} \frac{ a_n }{2(1+da_n)}= \frac{1+a}{2(1+da)} .$$
		Therefore, 		for every $\varepsilon>0$,  there exists an integer  $n_\varepsilon$  such that for all  $n \ge
		n_\varepsilon$, the following inequalities hold:
		\begin{align*}\frac{1}{\sqrt{1+da_n}} \leq &\sqrt{\frac{1}{1+ da}}+ \varepsilon ,\\
			\frac{ a_n }{2(1+da_n)} \leq& \frac{a}{2(1+da)} + \varepsilon^2 .
		\end{align*}
		In particular, choosing $\varepsilon:= \big(4d(1+da)\big)^{-1/2}$, we obtain that  definitely  in~$n$,
		\begin{align*}
			\frac{1}{\sqrt{1+da_n}} \leq &\frac{2\sqrt d +1}{\sqrt{4d(1+da)}}  ,\\
			\frac{ a_n}{2(1+da_n)} \leq&  \frac{2da+1}{4d(1+da)} .
		\end{align*}
		In the following, $C$  denotes a positive constant depending only on
		$d$ and  $a$; its value may change from line to line.
		\begin{align*}
			f_{a_n}(\boldsymbol{\lambda})\Pi(\boldsymbol{\lambda} ) \mathbb I_{\mathcal O_i}(\boldsymbol{\lambda}) &\leq C\exp\tonde{ -\frac { \| \boldsymbol \lambda \|^2} 2 + \frac{ 2da+1}{4d(1+da)} (\bold 1^\top \boldsymbol \lambda)^2}\Pi(\boldsymbol \lambda ) \Delta(\boldsymbol \lambda ) \mathbb I_{\mathcal O_i\cap\mathcal O}(\boldsymbol{\lambda})\\
			& \leq C\exp\tonde{ -\frac 1 2 \tonde{ 1 - \frac{2da+1}{2(1+da)}} \| \boldsymbol \lambda \|^2 }\Pi(\boldsymbol{\lambda}) \Delta(\boldsymbol \lambda ) \mathbb I_{\mathcal O_i\cap\mathcal O}(\boldsymbol{\lambda})\\
			& = C \exp\tonde{ -\frac{ \| \boldsymbol \lambda \|^2 }{4(1+da)}}\Pi(\boldsymbol{\lambda}) \Delta(\boldsymbol \lambda ) \mathbb I_{\mathcal O_i\cap\mathcal O}(\boldsymbol{\lambda})
		\end{align*}
		where the second inequality follows from the Cauchy-Schwarz inequality. Let $g(\boldsymbol{\lambda})
		$ denote the right-hand side of the last equality, and let $\boldsymbol{\widetilde Z} = (\widetilde Z_1, \dots, \widetilde{Z_d})$ be a random vector with $\boldsymbol{\widetilde Z}  \sim \mathcal N(0,2(1+da))I_d)$. So
		\begin{equation}\label{bound1}
			\begin{aligned} \int_{\R^d} g(\boldsymbol \lambda) \di \boldsymbol \lambda &= C\E\quadre{\Pi\big( \boldsymbol {\widetilde Z}\big ) \Delta\big( \boldsymbol{\widetilde Z} \big)  \mathbb I_{\mathcal O_i \cap \mathcal O}\big( \boldsymbol{\widetilde Z}\big) } \\
				& \leq C E \quadre{\Pi(\boldsymbol{Z}) \Delta(\boldsymbol{Z}) \mathbb I_{\mathcal O_i \cap \mathcal O}(\boldsymbol{Z}) } \leq C \E \quadre{\Pi(\boldsymbol{Z}) \Delta(\boldsymbol{Z}) \mathbb I_{\mathcal O}(\boldsymbol{Z}) }\\
				& \leq C \E\quadre{\prod_{j=1}^d |Z_j|\prod_{h=j+1}^d |Z_j - Z_h| \mathbb I_{Z_1 \leq \cdots Z_{i} < 0 < Z_{i+1} \leq \cdots \leq Z_d}}  \\
			\end{aligned}
		\end{equation}
		where $\boldsymbol{ Z}\sim \mathcal N(0,I_d)$ and we have used the identities
		\begin{align*} \Pi(a\boldsymbol x) = &a^d \Pi(\boldsymbol x),\\ \Delta(a\boldsymbol x)  =& a^{(d^2-d)/2}  \Delta(\boldsymbol x).
		\end{align*}
		To bound the last quantity appearing in~\eqref{bound1},
		one can observe that
		\begin{align*}
			|Z_j| \mathbb I_{\mathcal O}(\boldsymbol{Z}) &\leq \max \{|Z_1|, |Z_d| \}.
		\end{align*}
		Hence
		$$ \Pi(\boldsymbol{Z})\mathbb I_{ \mathcal O}(\boldsymbol{Z})  \leq \max \{|Z_1|, |Z_d| \}^d $$
		and similarly
		\begin{equation}\label{in_GOI0}
			\Delta(\boldsymbol{Z})\mathbb I_{ \mathcal O}(\boldsymbol{Z})\leq  C \max \{|Z_1|, |Z_d| \}^{(d^2-d)/2}.
		\end{equation}
		Therefore, using~\eqref{bound1}, we obtain
		\begin{align*}
			\int_{\R^d} g(\boldsymbol \lambda) \di \boldsymbol \lambda \leq& C \E\quadre{\max\s{|Z_1|, |Z_d|}^{(d^2+d)/2}
			}\\
			\leq &C
			\E\quadre{ |Z_1|^{(d^2+d)/2}} <\infty
		\end{align*}
		where the last inequality follows since $Z_1 \sim Z_d$ and since a Gaussian random variable has all absolute moments finite.
		One can conclude using the dominated convergence theorem.

	\end{proof}
	The next lemma provides an explicit change of variables that transforms the expectation under the $\mathrm{GOI}$ density into an expectation under multivariate Gaussian density.  As a consequence, one can estimate the $\mathrm{GOI}$ expectation by generating samples from a multivariate Gaussian distribution and applying a Monte Carlo method.  Instead of sampling from the $\mathrm{GOI}$ distribution directly, we can sample from a standard Gaussian vector with a known covariance structure  and compute the corresponding average of the transformed integrand (see~\Cref{numerical_evidence} for more details)
	\begin{lemma}\label{GOIc}Let $1+dc>0$ then
		$$ \E_{\GOI(c)}^d [ g(\boldsymbol\lambda)]=
		\frac{(2\pi)^{d/2}}{K_d } \E \quadre{ g(\bold Z)  \Delta(\bold Z)\mathbb I_{\mathcal O}(\bold Z)} $$
		where $\bold Z \sim \mathcal N(0,\Theta_{c;d})$ with $\Theta_{c;d}:= I_d + c \bold 1 \bold 1^\top $.
	\end{lemma}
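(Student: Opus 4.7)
The plan is to recognize that this is essentially a direct comparison of two densities. I would start by writing out the right-hand side explicitly: since $\bold Z \sim \mathcal N(0,\Gamma_{c;d})$, its density at $\boldsymbol{z}\in\R^d$ is
\[
p_{\Gamma_{c;d}}(\boldsymbol{z}) = \frac{1}{(2\pi)^{d/2}\sqrt{\det \Gamma_{c;d}}} \exp\!\left(-\tfrac{1}{2}\,\boldsymbol{z}^\top \Gamma_{c;d}^{-1}\boldsymbol{z}\right),
\]
so the right-hand side of the claimed identity becomes
\[
\frac{1}{K_d\sqrt{\det \Gamma_{c;d}}} \int_{\R^d} g(\boldsymbol{z})\,\Delta(\boldsymbol{z})\,\mathbb I_{\mathcal O}(\boldsymbol{z})\,\exp\!\left(-\tfrac{1}{2}\boldsymbol{z}^\top\Gamma_{c;d}^{-1}\boldsymbol{z}\right)\di\boldsymbol{z}.
\]
Comparing with the expression~\eqref{dens} for $f_c$ and the definition~\eqref{int_GOI} of $\E_{\GOI(c)}^d[\cdot]$, the lemma reduces to the two identities
\[
\det \Gamma_{c;d} = 1+cd, \qquad \boldsymbol{z}^\top \Gamma_{c;d}^{-1}\boldsymbol{z} = \|\boldsymbol{z}\|^2 - \frac{c(\bold 1^\top\boldsymbol{z})^2}{1+cd}.
\]

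Both are standard consequences of the rank-one-update structure of $\Gamma_{c;d} = I_d + c\,\bold 1\bold 1^\top$. For the determinant I would invoke the matrix determinant lemma with $A=I_d$, $u=\bold 1$, $v=c\bold 1$, giving $\det(I_d + c\bold 1\bold 1^\top) = 1 + c\,\bold 1^\top\bold 1 = 1+cd$. For the inverse, I would apply the Sherman–Morrison formula in the same way to obtain
\[
\Gamma_{c;d}^{-1} = I_d - \frac{c\,\bold 1\bold 1^\top}{1+cd},
\]
and the displayed quadratic form follows by a single line of computation. The hypothesis $c\geq -1/d$ is exactly what ensures $1+cd\geq 0$, so that $\Gamma_{c;d}$ is positive semi-definite and the Gaussian vector $\bold Z$ is well defined (the boundary case $c=-1/d$ being a degenerate limit that can either be excluded or handled by continuity).

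Substituting these two identities back, the integrand on the right-hand side becomes precisely $g(\boldsymbol{z})f_c(\boldsymbol{z})$ up to the constant $(2\pi)^{d/2}$, which cancels against the prefactor of the Gaussian density, recovering the left-hand side. There is no real obstacle here: the only thing to be careful about is keeping track of the normalizing constants, ensuring that $\sqrt{\det\Gamma_{c;d}}=\sqrt{1+cd}$ is the factor that matches the $\sqrt{1+cd}$ appearing in the GOI density, and that the quadratic-form rewriting reproduces the sign and coefficient of the $(\bold 1^\top \boldsymbol \lambda)^2$ term in~\eqref{dens}.
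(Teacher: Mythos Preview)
Your proof is correct and follows essentially the same route as the paper: both arguments reduce the identity to matching the GOI density $f_c$ with the Gaussian density of $\mathcal N(0,\Gamma_{c;d})$ times $\Delta\cdot\mathbb I_{\mathcal O}$, which boils down to the two facts $\det\Gamma_{c;d}=1+cd$ and $\Gamma_{c;d}^{-1}=I_d-\tfrac{c}{1+cd}\mathbf 1\mathbf 1^\top$. The only cosmetic difference is that you invoke the matrix determinant lemma and Sherman--Morrison, whereas the paper verifies the inverse by a direct product computation and obtains the determinant from the eigenvalues of $\Gamma_{c;d}$.
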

	\begin{proof}From~\eqref{dens} we obtain
		\begin{equation}\label{1} f_c(\boldsymbol{\lambda})=   	\frac{1}{K_d \sqrt{1 + dc}} \exp\tonde{-\frac 1 2 \boldsymbol \lambda^\top A_{c;d} \boldsymbol{\lambda}} \Delta(\boldsymbol\lambda) \mathbb I_{\mathcal O}(\boldsymbol{\lambda})  .
		\end{equation}
		where
		$$ A_{c;d}:= I_d - \frac{c}{1+dc} \bold 1 \bold 1 ^\top .$$
		We note that for every $\boldsymbol  x \in \R^d$
		\begin{align*}A_{c;d} \Theta_{c;d}\boldsymbol  x &= \tonde{I_d - \frac{c}{1+dc} \bold 1 \bold 1^T} \tonde{I_d + c \bold 1 \bold 1 ^T } \boldsymbol  x\\
			& =  \boldsymbol{ x} +\tonde{c -\frac{c}{1+dc}- \frac{c^2 (\boldsymbol{1}^\top \boldsymbol{1}) }{1+dc}}(\boldsymbol{1}^\top \boldsymbol{ x})  \bold 1 = \boldsymbol x
		\end{align*}
		where the last equality follows from the identity $\bold 1^\top \bold 1 = d $.  Since this holds for  any arbitrary $\boldsymbol x$, we conclude that  $\Theta_{c;d}= A_{c;d}^{-1}$ , indeed $A_{c;d}$	is symmetric. \\
		Let us compute the eigenpairs of $\Theta_{c;d}$. 	Let  $\boldsymbol {v_1}, \dotsc, \boldsymbol{v_{d-1}}$ be a basis of $Span(\bold 1)^\perp$ then
		$$ \Theta_{c;d} \boldsymbol{v_i}= \boldsymbol{v_{i}}, \quad i = 1, \dots, d-1$$
		and
		$$ \Theta_{c;d}\bold 1   = \bold 1   + c  (\bold 1^\top \bold 1  )\bold 1  =  (1+dc) \bold 1  . $$
		We have thus shown that
		$1$ is an eigenvalue of $\Theta_{c;d}$ with geometric multiplicity  $d-1$, and that
		$1+dc $ is a simple eigenvalue. In particular, $\Theta_{c;d}$ has positive eigenvalues and thus it is positive definite.
		Recalling that the density of a centered Gaussian vector with covariance matrix $A\in \R^{d\times d}$	is given by
		$$ \psi_A(\boldsymbol{x}) = \frac{1}{(2\pi)^{d/2} \sqrt{\det(A)}} \exp\tonde{ - \frac 1 2 \boldsymbol{x}^\top A^{-1} \boldsymbol{x}}, $$
		we can rewrite~\eqref{1} as
		$$  f_{c}(\boldsymbol \lambda)  =\Delta(\boldsymbol\lambda) \mathbb I_{\mathcal O}(\boldsymbol{\lambda})
		\frac{(2\pi)^{d/2}}{ K_d } \psi_{\Theta_{c;d}}(\boldsymbol \lambda)   $$
		and hence the claim.
	\end{proof}
	\subsection{Proof of~\Cref{th3}}\label{proof_th2}
	From the computation in~\Cref{verifica}, one can apply~\Cref{cheng2} and hence
	$$\E[\mathcal C_i(T_L,u)] = \frac{2\sqrt \pi}{
		\Gamma\tonde{\frac{d+1} 2 }\eta_L^{d/2}} \int_u^{+\infty} \phi(x) \E_{GOI\tonde{\frac{1+\eta_L - \xi_L}2}}\quadre{\Pi\tonde{\boldsymbol{\lambda}_{\xi_L,x}}
		\mathbb I_{\mathcal O_i} \tonde{\boldsymbol{\lambda}_{\xi_L,x}}} \di x$$
	where $\eta_L$ is as above and $\xi_L:= \kappa_L'(1)^2(\kappa_L''(1))^{-1}$.
	So, using~\eqref{as_kern1} and~\eqref{as_kern2} we have
	\begin{align*}
		&\xi_L = L^{-1} \eta_1 & \text{ if } \kappa'(1) = 1,\\
		&\xi_L = \eta_1 (\kappa'(1)-1) +o(1) & \text{ if } \kappa'(1) >1,\\
		&\xi_L = \eta_1(1-\kappa'(1))+ O(\kappa'(1)^L)& \text{ if } \kappa'(1) <1.
	\end{align*}
	, by a trivial computation,
	\begin{align*}
		\frac{1+ \eta_L - \xi_L} 2 & = \frac{1}{2} (1 + \eta_1 (1 - \kappa'(1)) := c(\kappa'(1),\kappa''(1))
	\end{align*}
	where $c(\kappa'(1), \kappa''(1))$  denote a constant that does not depend on $L$. For notational simplicity, we will omit its explicit dependence on $\kappa$ and write just $c$.
Let $\xi:= \lim_L \xi_L$, (we note that for $\kappa'(1)<1$ we have $\xi = 0 $ and for $\kappa'(1)>1$, $\xi = \kappa'(1)^2/\kappa''(1)$)  we prove that
\begin{align*} \lim_{L \to + \infty} \int_u^{\infty} \phi(x) \E^d_{\GOI(c)} \quadre{\Pi\tonde{\boldsymbol{\lambda}_{\xi_L,x}}\mathbb I_{\mathcal O_i}\tonde{\boldsymbol{\lambda}_{\xi_L,x} }} \di x  \\=\int_u^{\infty} \phi(x) \E^d_{\GOI(c)} \quadre{\Pi\tonde{\boldsymbol{\lambda}_{\xi,x}}\mathbb I_{\mathcal O_i}\tonde{\boldsymbol{\lambda}_{\xi,x} }} \di x .
\end{align*}
As previous, let $C$ and $C_1$ denote  positive constants depending only on
$d$ and $c$; their value may change from line to line and
%
let $\boldsymbol Z \sim \mathcal N(0, \Theta_{c;d})$  where $\Theta_{c;d}$ is as in~\Cref{GOIc}. Hence
\begin{align*}
	&\int_u^\infty \phi(x)\E^d_{\GOI(c)} \quadre{\Pi\tonde{\boldsymbol{\lambda}_{\xi_L,x}}\mathbb I_{\mathcal O_i}\tonde{\boldsymbol{\lambda}_{\xi_L,x} }} \di x  \\
	&= C\int_{u}^\infty \phi(x)
	\E_{\boldsymbol Z} \quadre{\Pi\tonde{\boldsymbol{Z}_{\xi_L,x}}\mathbb I_{\mathcal O_i}\tonde{\boldsymbol{Z}_{\xi_L,x} } \Delta(\boldsymbol{Z}) \mathbb I_{\mathcal O}(\boldsymbol{Z}) }\di x \\
	&=C \int_u^\infty  \int_{\R^d}  \phi(x)
	\Pi\tonde{\boldsymbol{\lambda}_{\xi_L,x}}\mathbb I_{\mathcal O_i}\tonde{\boldsymbol{\lambda }_{\xi_L,x} } \Delta(\boldsymbol{\lambda}) \mathbb I_{\mathcal O}(\boldsymbol{\lambda}) \psi_{\Theta_{c;d}}(\boldsymbol{\lambda}) \di \boldsymbol{\lambda} \di x.
\end{align*}
Let
$$g_L(\boldsymbol{\lambda},x) = \phi(x)
\Pi\tonde{\boldsymbol{\lambda}_{\xi_L,x}}\mathbb I_{\mathcal O_i}\tonde{\boldsymbol{\lambda }_{\xi_L,x} } \Delta(\boldsymbol{\lambda}) \mathbb I_{\mathcal O}(\boldsymbol{\lambda}) \psi_{\Theta_{c;d}}(\boldsymbol{\lambda})
$$
We note that if $\mathbb I_{\mathcal O}(\boldsymbol{\lambda}) \neq 0$  then
$$\tonde{\boldsymbol\lambda_{\xi_L,x}}_i \leq \max\left\{ \left| \lambda_1 - \frac{\xi_L x }{\sqrt 2}\right|, \left| \lambda_d- \frac{\xi_L x }{\sqrt 2}\right| \right\}, \qquad i = 1, \dots, d$$
and
\begin{equation}\label{eq3}
	\begin{aligned}\Pi\tonde{\boldsymbol\lambda_{\xi_L,x}} &\leq \max\left\{\left| \lambda_1 - \frac{\xi_L x }{\sqrt 2}\right|, \left| \lambda_d- \frac{\xi_L x }{\sqrt 2}\right| \right\}^d \\
		&\leq  \left| \lambda_1 - \frac{\xi_L x }{\sqrt 2}\right| ^d +  \left| \lambda_d- \frac{\xi_L x }{\sqrt 2}\right|^d \\
		&\leq C\tonde{ |\lambda_1|^d + |\lambda_d|^d + 2^{1-d/2} |\xi_L x|^d}
	\end{aligned}
\end{equation}
where in the last inequality we used that, for all $a,b\in \R$,  it holds that  $$|a-b|^d \leq C (|a|^d + |b|^d).$$
Now, since $\xi_L x \to \xi x $, we have $\xi_L x \leq \xi x  + 2^{-1}\varepsilon^{1/d}$ for $L$ large enough. Hence $$|\xi_L x |^d \leq C \tonde{ |\xi x |^d + 2^{-1}\varepsilon}$$
and thus, from equation~\eqref{eq3}, we obtain
$$\Pi\tonde{\boldsymbol\lambda_{\xi_L,x}} \leq C\tonde{ |\lambda_1|^d + |\lambda_d|^d + 2^{(d+2)/2} |\xi x |^d  + \varepsilon}. $$
Combining the previous inequality with~\eqref{in_GOI0}, it follows that
\begin{subequations}
	\begin{align}\nonumber &g_L(\boldsymbol \lambda, x)\\
		\nonumber
		&\leq C \phi(x) \tonde{ |\lambda_1|^d + |\lambda_d|^d + 2^{(d+2)/2} |\xi x |^d  + \varepsilon}\tonde{|\lambda_1|^{(d^2-d)/2}  +  |\lambda_d|^{(d^2-d)/2}} \psi_{\Theta_{c;d}}(\boldsymbol{\lambda})\\
		&=  C\phi(x) \tonde{ |\lambda_1|^d + |\lambda_d|^d  + \varepsilon}\tonde{|\lambda_1|^{(d^2-d)/2}  +  |\lambda_d|^{(d^2-d)/2}} \psi_{\Theta_{c;d}}(\boldsymbol{\lambda}) \label{s1}\\
		&\quad + C_1 |\xi x |^d   \tonde{|\lambda_1|^{(d^2-d)/2}  +  |\lambda_d|^{(d^2-d)/2}} \psi_{\Theta_{c;d}}(\boldsymbol{\lambda}) \label{s2}.
	\end{align}
\end{subequations}
So, if we denote by $\phi(x) h_1(\boldsymbol{\lambda})$ the expression in~\eqref{s1} and by
$h_2(\boldsymbol{\lambda})\phi(x)|x|^d $ the expression in~\eqref{s2}, we obtain  $$g_L(\boldsymbol{\lambda},x) \leq h_1(\boldsymbol{\lambda}) \phi(x) + h_2(\boldsymbol{\lambda}) \phi(x) |x|^d$$
and recalling the definition of $\Theta_{c;d}$ we have
\begin{align*}&\int_{u}^\infty \int_{\R^d}
	\big(h_1(\boldsymbol{\lambda}) \phi(x) + h_2(\boldsymbol{\lambda}) \phi(x) |x|^d \big) \di \boldsymbol{\lambda} \di x \\
	&= 2(1-\Phi(u))  \E\quadre{|Z_1|^{(d^2+d)/2} + |Z_1|^d |Z_2|^{(d^2-d)/2} + \varepsilon|Z_1|^{(d^2-d)/2}}\\
	&\quad +2 \E\quadre{|Z|^d \mathbb I_{[u,+\infty)}(Z)} \E[|Z_1|^{(d^2-d)/2}] <\infty
\end{align*}
for $Z\sim \mathcal N(0,1)$ and both $Z_1$ and $Z_2$ are zero-mean Gaussian random variables with variance equal to $1+c$ and the covariance between $Z_1$ and $Z_2$ equals $c$.  For $\kappa'(1)\leq 1$, since $\eta=0$, then $\boldsymbol{\lambda}_{\xi,x} = \boldsymbol{\lambda}$ and hence, by dominated convergence
\begin{align*}\lim_{L \to + \infty }\eta_L^{d/2} \E[\mathcal C_i(T_L,u)] &= \frac{2\sqrt \pi}{
		\Gamma\tonde{\frac{d+1} 2 }} \int_u^{+\infty} \phi(x) \E_{GOI\tonde{\frac{1+\eta_1(1-\kappa'(1))}2}}\quadre{\Pi\tonde{\boldsymbol{\lambda}}
		\mathbb I_{\mathcal O_i} \tonde{\boldsymbol{\lambda}}} \di x \\
	& =  \frac{2\sqrt \pi}{
		\Gamma\tonde{\frac{d+1} 2 }} \E^d_{GOI\tonde{\frac{1+\eta_1(1-\kappa'(1))}2}}\quadre{\Pi\tonde{\boldsymbol{\lambda}}
		\mathbb I_{\mathcal O_i} \tonde{\boldsymbol{\lambda}}} (1-\Phi(u))
	.
\end{align*}
If $\kappa'(1) = 1$ then  $\eta_L^{d/2} = L^{-d/2} \eta_1^{d/2}$ and hence
$$  \E[ C_i (T_L(u)] = L^{d/2} \Bigg(  \frac{2\sqrt \pi}{\eta^{d/2}
	\Gamma\tonde{\frac{d+1} 2 }} \E^d_{GOI(1/2)}\quadre{\Pi\tonde{\boldsymbol{\lambda}}
	\mathbb I_{\mathcal O_i} \tonde{\boldsymbol{\lambda}}} (1-\Phi(u))  + o(1)\Bigg)  . $$
Recalling the definition of $A_i$ (cf.,~\eqref{Ai}) we have that the claim holds in the sparse cases.  In the same way, one can prove the claim as for the sparse case.  Moreover, if $\kappa'(1)>1$, putting
\begin{equation}\label{Di}
	\begin{aligned}D_i(\kappa'(1), \kappa''(1), u):= &\frac{2 \sqrt{\pi}}{\eta^{d/2}(\kappa'(1)-1) \Gamma\tonde{ \frac{d+1} 2 }}  \cdot \\	&\int_u^\infty \E_{GOI\tonde{\frac{1+c(\kappa'(1)(1-\kappa'(1))}2}}\quadre{\Pi\tonde{\boldsymbol{\lambda}_{\eta,x}}
			\mathbb I_{\mathcal O_i} \tonde{\boldsymbol{\lambda}_{\eta,x}}}
	\end{aligned}
\end{equation}
where we recall that $\eta = \kappa'(1)^2/ \kappa''(1)$.

\begin{remark}
	As in~\Cref{remt1}, the $\mathrm{GOI}$ matrices involved are well defined for every $L$ and also in the limit. Indeed,
	\[
	1 + d \, \frac{1 + \eta_L - \xi_L}{2} > 0
	\quad \Longleftrightarrow \quad
	\eta_L - \xi_L > -\tfrac{1+d}{2}.
	\]
	We note that $\eta_L - \xi_L = -\gamma_L$, where $\gamma_L$ is given in~\eqref{deg_cond}. Since $\gamma_L < \tfrac{d+2}{2}$, the inequality above holds. Moreover, in the proof of~\Cref{th3} we have shown that
	\[
	\frac{1 + \eta_L - \xi_L}{2} = c\!\left(\kappa'(1), \kappa''(1)\right),
	\]
	which is independent of $L$. Hence, also the limiting $\mathrm{GOI}$ matrix is well defined.
\end{remark}

\section{Numerical Evidence}\label{numerical_evidence}
The numerical experiments reported in this section are designed primarily
to confirm the theoretical results established in the previous sections.
They also provide exploratory evidence in regimes that are not yet
covered by our main theorems, offering intuition for possible extensions
of the theory. All experiments can be reproduced from the repository \url{https://github.com/simmaco99/SpectralComplexity}. Simulations are conducted on $\S^2$ using the HEALPix package~\cite{gorski2005healpix}, which is widely recognized for its capabilities in handling pixelization of the sphere and performing analyses in spherical harmonics.
\smallskip

To validate the results of~\Cref{th2}, we follow the approach of~\cite{nostro2} and use the Gaussian activation function defined by $\sigma_a(u) = e^{-a^2 x^2/2}$, considering three different values for the parameter $a$. This choice allows us to explore all three regimes: for the low-disorder case, we set $a^2= 1$; for the sparse case, we take $a^2 = 1 + \sqrt{2}$; and for the high-disorder case, we choose $a^2= 9$.  To compute numerically the expected number of minima and maxima, for each value of $a$, we generate 1000 random neural networks with input on $\mathbb{S}^2$ and hidden layers of width $n = 1000$.  Figure~\ref{fig::critical} shows the Monte Carlo estimation of the number of minima and maxima. To obtain the theoretical values, one must know the value of $A_0$ given by~\eqref{Ai}; we use~\Cref{GOIc} to express it as an expectation over Gaussian variables, which we estimate using a Monte Carlo method. In Figure~\ref{fig::monte}, we illustrate how the estimated value of $A_0$ changes as the number of samples increases.
In~\Cref{fig::critical}(c), we observe that the theoretical and estimated values closely match as long as the depth remains below $L = 40$. This is due to the fact that, working in finite-precision arithmetic, all our random fields are truncated at frequency $\ell_{\max} = 1356$, and therefore we inevitably lose information for sufficiently large $L$. This loss is confirmed by~\Cref{fig::variance}, where we compute the percentage of variance explained using only the first 1356 frequencies. We note that this percentage starts to decrease precisely from $L = 40$ onward.

\smallskip

Since the results of~\Cref{th2} cannot be applied to irregular fields (such as fields generated by ReLU activation),~\Cref{fig::relu} and~\Cref{tab::ReLu} show how for a ReLU network the number of critical points increases with the resolution of the map (suggesting a potential divergence, as noted in~\Cref{rem}), while it remains approximately constant in the case of fields associated with more regular activation (as the Gaussian $\sigma_1$ or $\tanh$).

\begin{figure}[!htb]
	\centering
	\subfloat[Low-disorder case: $a^2=1$.]
	{	\resizebox{0.45\textwidth}{!}{\includegraphics{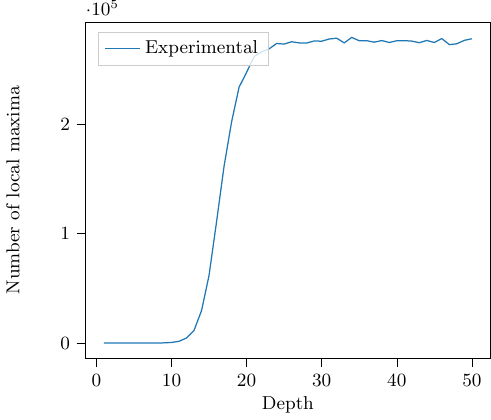}}}
	\subfloat[Sparse case: $a^2=1 + \sqrt 2$.]{
		\resizebox{0.45\textwidth}{!}{	\includegraphics{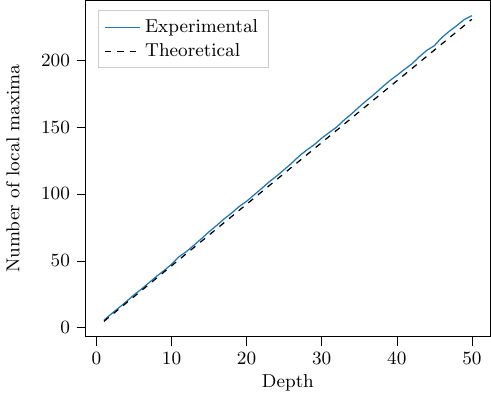}}}\\
	\subfloat[High-disorder case: $a^2=9$.]
	{
		\resizebox{0.45\textwidth}{!}{	\includegraphics{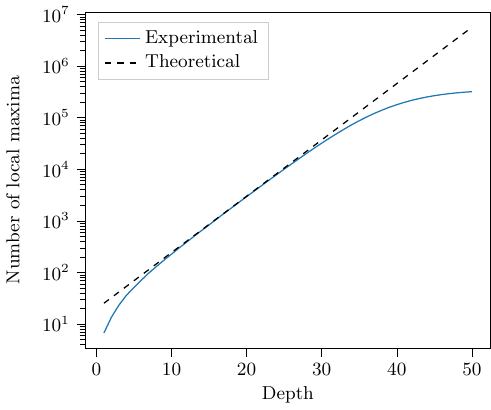}}
	}
	\caption{Number of minima and maxima points for the Gaussian activation functions with different values of $a$. The dashed lines represent our theoretical findings. These numbers are computed using  $1000$ Monte Carlo replicas.}
	\label{fig::critical}
\end{figure}

\begin{figure}[!htb]
	\centering
	\resizebox{0.45\textwidth}{!}{\includegraphics{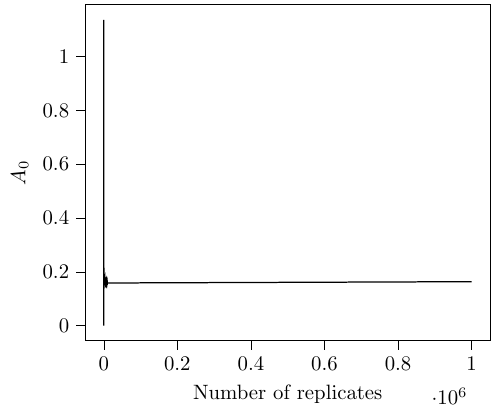}}
	\caption{Approximation of the $A_{0}$ as the number of Monte Carlo replicates increases for dimension $d=2$. }
	\label{fig::monte}
\end{figure}
\begin{figure}[!htbp]
	\centering
	\resizebox{0.45\textwidth}{!}{	\includegraphics{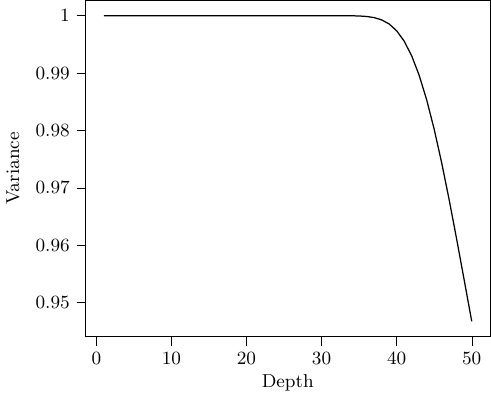}}
	\caption{Percentage of the variance explained by using the first $1536$ frequencies for  random neural networks on $\S^2$ with $\sigma_3$ as activation function for different values of depth $L$. To obtain the plot, we compute the angular power spectrum of this network using a Gauss--Legendre quadrature with $5000$ points.}
	\label{fig::variance}
\end{figure}

\begin{figure}[!htb]
	\centering
	\resizebox{0.45\textwidth}{!}{\includegraphics{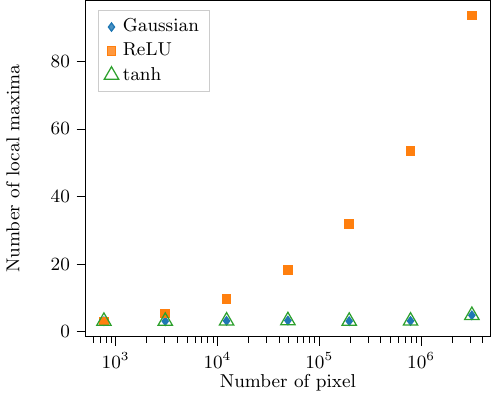}}
	\caption{Mean number of local maxima over $1000$ Monte Carlo replicates. The fields are shallow (one hidden layer) random neural networks with $1000 $ neurons. We use three activation functions: Gaussian with  $a=1+\sqrt 2$ (diamond), ReLU (square), and $\tanh$ (triangle).
	}
	\label{fig::relu}
\end{figure}

\begin{table}[!htbp]
	\centering
	\caption{Number of local minima and maxima for different activations and number of pixels. A resolution level $r$ corresponds to a HEALPix map with $12 \cdot 2^{2r}$ pixels.}
	\label{tab::ReLu}

	\begin{subtable}{\textwidth}
		\centering
		\caption{Local minima}
		\begin{tabular}{C C C C}
			\toprule
			Resolution & Gaussian & ReLU & $\tanh$ \\
			\midrule
			3 & 3.108 & 3.085 & 1.548 \\
			4 & 3.136 & 5.367 & 1.569 \\
			5 & 3.260 & 9.900 & 1.563 \\
			6 & 3.182 & 18.10 & 1.620 \\
			7 & 3.146 & 31.28 & 1.661 \\
			8 & 3.176 & 53.51 & 1.677 \\
			9 & 4.750 & 93.40 & 1.570 \\
			\bottomrule
		\end{tabular}
	\end{subtable}
	\vspace{1em} 

	\begin{subtable}{\textwidth}
		\centering
		\caption{Local maxima}
		\begin{tabular}{C C C C}
			\toprule
			Resolution & Gaussian & ReLU & $\tanh$ \\
			\midrule
			3 & 3.082 & 3.027 & 1.552 \\
			4 & 3.094 & 5.313 & 1.570 \\
			5 & 3.174 & 9.710 & 1.563 \\
			6 & 3.270 & 18.17 & 1.619 \\
			7 & 3.102 & 31.80 & 1.661 \\
			8 & 3.150 & 53.49 & 1.677 \\
			9 & 4.820 & 93.48 & 1.575 \\
			\bottomrule
		\end{tabular}
	\end{subtable}\\

\end{table}

\subsection*{Acknowledgements}
The author wishes to thank Professor Domenico Marinucci  for his helpful suggestions and insightful discussions.

This work was partially supported by the MUR Excellence Department Project MatMod@TOV awarded to the Department of Mathematics, University of Rome Tor Vergata, CUP E83C18000100006. We also acknowledge financial support from the MUR 2022 PRIN
project GRAFIA, project code 202284Z9E4.

The author is a member of the Gruppo Nazionale per l’Analisi Matematica, la Probabilit\`a e le loro Applicazioni (GNAMPA), which is part of the Istituto Nazionale
di Alta Matematica (INdAM).
\newpage
\bibliographystyle{splncs04}
\bibliography{ref}
\appendix

\numberwithin{equation}{section}

\section{Proof of~\Cref{prop::1}}\label{appA}
To prove the proposition, we adapt the proof of~\cite[Proposition 3.14]{nostro2}.
\subsection{Outline of the proof}

Let $\varphi_N:\, \S^d \setminus \s{N} \to \R^d$ denote the stereographic projection from the north pole $N$  and $\varphi_S:\, \S^d \setminus \s{S} \to \R^d$ that from the south pole $S$. Then,
\begin{align*}
	&\varphi_N\tonde{\begin{pmatrix} x_1 , \dots , x_{d+1} \end{pmatrix}} = \begin{pmatrix}
		\frac{x_1}{1-x_{d+1}} , 	\frac{x_2}{1-x_{d+1}} , \dots , \frac{x_d}{1-x_{d+1}}
	\end{pmatrix} , \\
	&	\varphi_S\tonde{\begin{pmatrix} x_1 , \dots , x_{d+1} \end{pmatrix}} = \begin{pmatrix}
		-\frac{x_1}{1+x_{d+1}} , 	\frac{x_2}{1+x_{d+1}}, \dots ,  \frac{x_{d}}{1+x_{d+1}}
	\end{pmatrix} .
\end{align*}
Since $ \{ (\S^d \setminus \s{N}, \varphi_N), (\S^d \setminus \s{S}, \varphi_S) \} $ is an atlas of $\S^d$, the fact that $ T \in C^1(\S^d) $ almost surely is equivalent to the fact that $f_N = T \circ \varphi_N^{-1}$ and $f_S = T \circ \varphi_S^{-1}$ are in $C^2(\R^d)$ almost surely.  Note that, in~\cite{nostro2}, under the assumption $\mathrm{CRI}>1$, the authors proved that $T\in C^1$ almost surely. Following the notation in~\cite{nostro2},  for every $x\in \R^d$  and for every $i=1, \dots, d$, we define
\begin{equation}\label{derivata}
	D^i f_N(s):= \lim_{h \to 0} \frac{f_N(s+he_i) - f_N(s)}{h} ,
\end{equation}
where $e_i$ is the $i$-th vector of the canonical basis of $\R^d$ and  the limit holds almost surely. We also recall that
\begin{equation}\label{coviii} \E[ f_N(t)f_N(s)] = \kappa(x_{t,s}), \quad  x_{t,s}:=1- \frac{\nn{t-s}}{(1+\nn t) (1+\nn s)}.
\end{equation}

The main steps of the proof are as follows:
\begin{enumerate}[label = \roman*)]
	\item \label{it:i}First, we prove that $D^i f_N$ has a mean-square derivative in every direction; that is, for each $j=1, \dots, d$  there exists a random field $D^{ji} f_N$ such that, for every $s \in \R^d$,
	\begin{equation}\label{derivata2}
		\lim_{h \to 0} \frac{D^if_N(s+he_i) - D^if_N(s)}{h} = D^{ji} f_N(s),
	\end{equation}
	where the limit holds in $L^2$.
	\item \label{it:ii} Second, we prove that, for every $s \in \R^d$ and $\varepsilon > 0$,
	\begin{equation}\label{somma}
		\sum_{n=1}^\infty \P\tonde{\left| n \tonde{D^i f_N(s+e_j/n) - D^if_N(s)} - D^{ji} f_N(s) \right| > \varepsilon} < \infty,
	\end{equation}
	and therefore, by the Borel--Cantelli lemma, the limit in~\eqref{derivata} also holds  almost surely.

	\item \label{it:iii} Then we prove that there exists $\eta,\zeta_1, \zeta_2, K > 0$ such that,  $\forall t, s \in \R^d, \n{t-s} < \eta$,
	\begin{equation}\label{kol}
		\E\quadre{| D^{ji} f_N(s) - D^{ji} f_N(t)|^{\zeta_1}} \leq K \n{t-s}^{d+\zeta_2}.
	\end{equation}
	Thus, by Kolmogorov’s continuity theorem \cite[Theorem 3.23]{kallenberg2002foundations}, there exists a Hölder continuos  version of $D^{ji} f_N$.

	\item \label{it:iv}Finally, by reviewing the proofs of items \ref{it:i}-\ref{it:iii}, we observe that the arguments rely solely on the expression for the covariance function: $\Sigma_{f_N} $ of the field $f_N$,
	$$
	\Sigma_{f_N}(s,t) = \E[f_N(s)f_N(t)] \,.
	$$
	Since $\Sigma_{f_N} = \Sigma_{f_S}$, the results established for $f_N$ also hold for $f_S$.
\end{enumerate}

We will also make use of the following standard lemmas.

\begin{lemma}[footnote~20, p.~23~\cite{book:73784}]
	\label{tecnico1}Let $(X_n)_{n\in \N}$ be a sequence of random variables with finite second moments. Then
	$$ X_n \xrightarrow{L^2} X \quad \Longleftrightarrow \quad \lim_{n,m\to + \infty}  \E[X_n X_m] <\infty  .$$
\end{lemma}

\begin{lemma}\label{tecnico2} Let $(X_n)_{n\in \N}$ be a sequence of random variables with finite second moments and assume that $X_n \xrightarrow{L^2}X$. Then for any $Y$ with finite second moment,
	$$\E[ X_n Y]\xrightarrow{L^2}\E[XY] . $$
\end{lemma}

\subsection{Proof of~\ref{it:i}: seconds derivatives exist in the mean-square sense}
Let us abbreviate $ f_N:=g$. Fix $t\in \R^d$  and $i,j\in \s{1, \dots, d}$. Define
\begin{align*}A_{hkpq}:=  &\E\Bigg[ \Big( g(t+he_i+ke_j) -g(t+he_i) -g(t+ke_j) + g(t) \Big) \\
	& \qquad  \Big(g(t+pe_i+qe_j) -g(t+pe_i) -g(t+qe_j) + g(t) \Big) \Bigg].
\end{align*}
In this section we prove that
$$ A_{hkpq} = hkpq \int_{0}^1\int_0^1 \int_0^1 \int_0^q F(x+h,y+k,z+p,k+q) \di x \di y \di z\di k $$
where $\|F\|_\infty \leq M $ and $F(x,y,z,k) \to 0 $ as $h,k,p,q\to 0 $. By dominated convergence, this implies that
$$ \lim_{hkpq\to 0 } \frac{A_{hkpq}}{hkpq} < \infty $$
and hence, by~\Cref{tecnico1}, \eqref{derivata2} holds in $L^2$.

\medskip
Let
$$E(x,y,z,k):= 1 - \frac{\nn{ (x- z)e_i  + (y-k)e_j}} {(1+\nn{t+xe_i+ ye_j})(1+\nn{t+ze_i +ke_j})}.$$
Since $E(x,y,z,k) \leq 1 $ and $E(x,y,z,k) \to 1 $ as $x,y,z,k\to 0 $, there exists $\delta>0$ such that
$$ \frac{1}2 \leq  E(x,y,z,k)\leq 1, \qquad \forall (x,y,z,k) \in B(0, 5\delta)$$
where $B(u,r)$ denotes the open ball in $\R^4$ with radius equal to $r$ and centred in $u\in \R^4$.  Hence, the map is well defined:
$$ G:\, [-\delta, \delta]^4 \to [-1,1] \qquad G(x,y,z,k) = \kappa(E(x,y,z,k))\,. $$
Now, for $h,k,p,q\in [-\delta, \delta]$, using~\eqref{coviii} we have:
$$A_ {hkpq} =   \Delta G(p,q|h,k) - \Delta  G(p,0|h,k) - \Delta(0,q|h,k)  + \Delta(0,0|h,k) . $$
where
$$  \Delta G(p,q|x,y) = G(x,y,p,q)- G(x,0,p,q) - G(0,y,p,q) + G(0,0,p,q) .  $$
The idea of the proof is to show that the following maps $ G_4(x,y,\bullet,k)$, $G_{34}(x, \bullet , k,z)$, $G_{234}(\bullet , y,k,z)$
from $[-\delta, \delta] \to \R$  are regular enough  to apply the Fundamental Theorem of Calculus.  Here, $G_{ab}$ denotes the second derivative $ \partial_a \partial_b G $.

Since $\kappa\in C^\infty((-1,1))$ and $E\in C^\infty((-1,1))$ the maps are $C^1$ in every point such that $E(x,y,z,k)\neq 1$.
\begin{itemize}
	\item Since $\kappa\in C^2([-1,1])$ and $E\in C^\infty([-\delta, \delta]^4)$, then $ G_4(x,y,\bullet ,k) \in C^1([-\delta, \delta])$ for every $x,y,k$.
	\item Let us study $G_{34}(x,\bullet,z,k)$ for every fiexed $x,k,z\in [-\delta,\delta]$. Since $\kappa\in C^2([-1,1])$ the function is continuous; 	setting $\boldsymbol{v}  = (x,y,z,k)$, then for every $y$ such that $ E(x,y,z,k) \neq 1$ we have
	\begin{equation}\label{bas-1}
		\begin{aligned}G_{234}(\boldsymbol v ) = &	 \kappa'''(E(\boldsymbol v)) E_2(\boldsymbol v)E_3(\boldsymbol v)E_4(\boldsymbol v) +H(\boldsymbol{v})
		\end{aligned}
	\end{equation}
	where
	\begin{equation}\label{bas0}
		\begin{aligned}
			H(\boldsymbol{v}) = &\kappa''(E(\boldsymbol v)) \Bigg( E_{23}(\boldsymbol{v}) E_4(\boldsymbol{v} ) + E_3 (\boldsymbol{v} )E_{24}(\boldsymbol{v} ) + E_2(\boldsymbol{v} )  E_{34}(\boldsymbol{v} ) \Bigg) \\
			&+ \kappa'(E(\boldsymbol v)) E_{234} (\boldsymbol{v} )    .
		\end{aligned}
	\end{equation} 	Using the regularity of $\kappa$, there exists $M_1\in \R$ such that
	\begin{equation}\label{b0}
		\begin{aligned} M_1:= \max_{\boldsymbol{v}\in [-\delta,\delta]^4}\Big| H(\boldsymbol{v})\Big|.
		\end{aligned}
	\end{equation}
	We note that the unique $y$ such that  $E(x,y,z,k) =1$  is $ k+  z-x$ if $i=j$ or $ k$ if $z=k$ and $i\neq j$.  Let
	$$ \boldsymbol{\xi }:=  (x,z-x+k,z, k) \delta_{ij} + (x,k,x,k) (1-\delta_{ij}).$$
	For any $\rho\in [-\delta, \delta]$ such that $|k  + (z-x)\delta_{ij}+\rho|\leq \delta$, let $\boldsymbol{\xi}_\rho:=\boldsymbol{\xi} + \rho e_2$.  So $E(\boldsymbol{\xi}_\rho)<1$ for any $\rho\neq 0$.  A trivial computation shows that
	$$ E_1(\boldsymbol{\xi}_\rho)= E_2(\boldsymbol{\xi}_\rho)= \frac{ \rho f(\rho)  }{(1+\nn{t+(z+k+\rho)e_i })^2(1+\nn{t+(z+k)e_i })},$$
	$$ E_3(\boldsymbol{\xi}_\rho)= E_4(\boldsymbol{\xi}_\rho)= -\frac{ \rho f(\rho)}{(1+\nn{t+(z+k)e_i })^2(1+\nn{t+(z+k+ \rho )e_i })} $$
	where
	$$ f(\rho):= - 4\Big( 1+\nn{t}  +2 t_i(z+k) +(z+k)^2+\rho(z+k +t_i)\Big)  . $$
	In particular, for any $a\in \s{1,\dots, 4}$, $E_a(\boldsymbol{\xi}_\rho) = O(\rho)$ as $\rho \to 0$ and using the definition of $\mathrm{CRI}$ we have:
	$$ \kappa'''(E(\boldsymbol \xi_\rho)) = O(\rho^{2(\beta-3)}), $$
	indeed,  $E(\boldsymbol \xi_\rho)= O(\rho^2)$.
	Let us recall that $\beta>2$, thus
	$$ \lim_{\rho \to 0}  \kappa'''(E(\boldsymbol \xi_\rho)) E_2(\boldsymbol\xi_\rho)E_3(\boldsymbol \xi_\rho)E_4(\boldsymbol \xi_\rho) = 0,  $$
	and there exists $\varepsilon = \varepsilon(\xi)$ such that
	\begin{equation}\label{bas1}  |\kappa'''(E(\boldsymbol \xi_\rho)) E_2(\boldsymbol\xi_\rho)E_3(\boldsymbol \xi_\rho)E_4(\boldsymbol \xi_\rho)|   \leq 1 \qquad |\rho|< \varepsilon.
	\end{equation}
	Otherwise, since $\kappa'''$ is continuous in $(-1,1)$, there exists $M_2$ such that for any $\rho> \varepsilon$ (such that $\boldsymbol{\xi}_\rho$ is well-defined),
	\begin{equation}\label{bas2}|\kappa'''(E(\boldsymbol{\xi}_\rho) E_2(\boldsymbol\xi_\rho)E_3(\boldsymbol \xi_\rho)E_4(\boldsymbol \xi_\rho) |< M_2 .
	\end{equation}
	Combing~\eqref{bas-1} with~\eqref{bas0}, \eqref{bas1}, and \eqref{bas2}, we have
	$$ \max_{\boldsymbol{w}\in [-\delta, \delta]^4 }  		  |G_{234}(\boldsymbol{w})|   \leq M_1 + M_2 + 1:= M .$$
	Therefore, $G_{34}(x, \bullet, z, k)$ is Lipschitz and thus absolutely continuous.
	\item Let us $y,z,k$ be fixed point in $[-\delta,\delta]$. We
	study the map $G_{234}(\bullet, y,z,k)$.We first observe that if $y \neq k$ and $i \neq j$, then $E(y,z,k) < 1$,
	and hence the function is $C^\infty$.
	Therefore, in the case $i \neq j$, we shall implicitly assume $y = k$.  Moreover, the unique solution $x$ of the equation $E(x,y,z,k) = 1$ is given by
	\[
	x = z + (k-y)\,\delta_{ij}.
	\]

	Using~\eqref{bas-1}, it follows that the function  $ x \mapsto G_{234}(x,y,z,k)$ 	 is well defined for all $x \neq z+(k-y)\delta_{ij}$. Set
	\[
	\boldsymbol{\eta}
	:= (\,z-y+k,\, y,\, z,\, k\,)\,\delta_{ij}
	+ (\,z,\, k,\, z,\, k\,)\,(1-\delta_{ij}) \, .
	\]
	For any $\rho\in [-\delta, \delta]$ such that $ |z + (k-y) \delta_{ij}|\leq \delta$, let $\boldsymbol{\eta}_\rho:=\boldsymbol \eta + \rho e_1 $.  As in the previous step, one can check that,  as $\rho \to 0 $
	\begin{equation}\label{decaay} E_a (\boldsymbol{\eta}_\rho)  = O(\rho), \quad  \kappa'''(E(\boldsymbol{\eta}_\rho))  = O(\rho^{2(\beta-3)}) .
	\end{equation}
	Now $\beta>2$, Thus, we can extend $G_{234}(x,y,z,k)$ by continuity by setting
	\[
	G_{234}(\boldsymbol{\eta}) = H(\boldsymbol{\eta}).
	\]
	By the chain rule, we obtain
	\begin{equation}\label{derii}
		\begin{aligned}
			G_{1234}(\boldsymbol{v}) =& H_2 (\boldsymbol{v}) +  \kappa''''(E(\boldsymbol{v})) E_1(\boldsymbol{v})E_2(\boldsymbol{v})E_3(\boldsymbol{v})E_4(\boldsymbol{v}) \\
			&  + \kappa'''(E(\boldsymbol{v})) \Bigg(  E_3(\boldsymbol{v}) E_4(\boldsymbol{v})  E_{12}(\boldsymbol{v}) +E_2 (\boldsymbol{v}) E_{4}(\boldsymbol{v}) E_{13}(\boldsymbol{v})  \\
			& \quad + E_2 (\boldsymbol{v})E_3(\boldsymbol{v}) E_{14}(\boldsymbol{v})  + E_{1} (\boldsymbol{v}) E_4(\boldsymbol{v})  E_{23}(\boldsymbol{v}) \\
			& \quad + E_{1} (\boldsymbol{v}) E_3(\boldsymbol{v})E_{24}(\boldsymbol{v}) + E_{1} (\boldsymbol{v}) E_2 (\boldsymbol{v})E_{34}(\boldsymbol{v}) \Bigg) \\
		\end{aligned}
	\end{equation}
	where
	\begin{align*}
		H_2(\boldsymbol{v}):= &    \kappa'(E(\boldsymbol{v})) E_{1234}(\boldsymbol{v})  + \kappa''(E(\boldsymbol{v}))\Bigg( E_{123}(\boldsymbol{v}) E_4(\boldsymbol{v}) + E_{23}(\boldsymbol{v}) E_{14}(\boldsymbol{v}) \\
		&\quad + E_{13}(\boldsymbol{v}) E_{24}(\boldsymbol{v}) + E_3(\boldsymbol{v}) E_{124}(\boldsymbol{v}) + E_{12}(\boldsymbol{v}) E_{34} (\boldsymbol{v})\\
		& \quad + E_2(\boldsymbol{v}) E_{134}(\boldsymbol{v}) + E_1(\boldsymbol{v})E_{234}(\boldsymbol{v})\Bigg) .
	\end{align*}
	Using the regularity of $\kappa$, exists $N_1\in \R$ such that
	\begin{equation*}
		\begin{aligned} N_1:= \max_{\boldsymbol{v}\in [-\delta,\delta]^4}\Big| H_2(\boldsymbol{v})\Big|.
		\end{aligned}
	\end{equation*}
	As $\rho \to 0$,
	$$ \kappa''''(E(\boldsymbol{\eta}_\rho) = O(\rho^{2(\beta-4)}) . $$
	Combining~\eqref{decaay} with the previous identity and using that  $E_{ab}(\boldsymbol \eta_\rho) = O(1)$ as $\rho \to 0 $, we obtain
	$$ G_{1234}(\boldsymbol{\eta}_\rho)= H_2(\boldsymbol{\eta}_\rho)  + O ( \rho^{2(\beta-4)+4}) + O(2(\beta-3)+2)  = O(\rho^{2(\beta-2)}) $$
	and hence,
	$$ \lim_{\rho \to 0 } G_{1234}(\boldsymbol{\eta}_\rho)  = H_2(\boldsymbol{\eta}) . $$
	Using the same argument as in the previous step, we have that $G_{234}(\bullet,y,z,k)$ is absolutely continuous on $[-\delta, \delta]$.
\end{itemize}
Since the maps $G_{234}(\bullet,y,k,s)$, $G_{34}(x,\bullet, k,s)$, $G_{4}(x, y,\bullet, s)$, and $G(x,y,z, \bullet)$  are absolutely continuous on $[-\delta, \delta]$, they satisfy the Fundamental Theorem of Calculus, i.e., for every $(x,y,z,s)\in [-\delta, \delta]^4$ we have:
\begin{align*} \int_{0}^x G_{1234}(\alpha,y,z,s) \di \alpha = G_{234}(x,y,z,s)- G_{234}(0,y,z,s),\\
	\int_{0}^y G_{234}(x,\beta,z,s) \di \beta = G_{34}(x,y,z,s)- G_{234}(x,0,z,s),\\
	\int_{0}^z G_{34}(x,y,\gamma,k) \di \gamma = G_{4}(x,y,z,k)- G_{234}(x,y,0, s),\\
	\int_0^k G_4(x,y,z,\mu) \di \mu = G(x,y,z,s) - G(x,y,z,0).
\end{align*}
Therefore,
\begin{align*} A_{hkpq} =& \iiint_{[0,h]\times[0,k] \times [0,p]\times [0,q]} G_{1234}(\boldsymbol{x}) \di \boldsymbol{x} \\
	=&hkpq \iiint_{[0,1]^4} G_{1234}(x e_1 + y e_2 + z e_3 +  se_4) \, \di x \di y \di z \di s.
\end{align*}
The previous computation has shown that the integrand function has a limit as $h,k,p,q\to 0$ and $G_{1234}$  is uniformly bounded, thus, by dominated convergence. Thus,
$$ \lim_{h,k,p,q\to 0 }  \frac{A_{hkpq}}{hkpq} $$
exists finite. By~\Cref{tecnico2}  $D^{ji} f$ is well defined in the $L^2$ sense.

\subsection{Computations on covariances}
In this subsection we compute several covariances involving the field and its derivatives.

\begin{lemma}\label{lem::calcolo1} Let $g:= T \circ  \varphi_N^{-1}$ where $T$ is an isotropic Gaussian random field on $\S^d$ with covariance function $\kappa$ and $\mathrm{CRI}>2$.  Then for every $s\in \R^d$ we have
	\begin{subequations}
		\begin{align}
			\label{calcolo1uguale}
			&\E[ g(t) D^{i}g(s)] = 0,\\
			\label{calcolo2uguale}
			&\E[ D^q g(s) D^i g(s)]  = \frac{4\delta_{iq}}{(1+\nn s)^2} \kappa'(1), \\
			\label{calcolo3uguale}
			&\E[D^qg(s) D^{ji} g(s)] =  \frac{8( s_q \delta_{ij} - s_i \delta_{jq}- s_j \delta_{iq})}{(1+\nn s)^3} \kappa'(1),\\
			\nonumber
			&\E[D^{pq}g(s) D^{ji} g(s)] =  16\frac{\delta_{i j} \delta_{p q} }{(1+\nn s)^3} \kappa'(1) \\
			& 			\label{calcolo4uguale} + \frac{- 2 \delta_{i j} {s}_{p} {s}_{q} + \delta_{i p} s_j {s}_{q} + \delta_{i q} s_j {s}_{p} + \delta_{j p} s_i {s}_{q} + \delta_{j q} s_i {s}_{p} - 2 \delta_{p q} s_i s_j }{(1+\nn{s})^{4}} \kappa'(1)\\
			\nonumber	&+ 16 \frac{ \delta_{i j} \delta_{p q} + \delta_{i p} \delta_{j q} + \delta_{i q} \delta_{j p}}{(1+\nn s)^4},\\
			\label{calcolo5uguale}
			&\E[ g(s) D^{ji}g(s) ]  = - \frac{4\delta_{ij}}{(1+\nn s)^2}
		\end{align}
	\end{subequations}
	where $s_k$ denotes the $k$-th component of the vector $s$.
\end{lemma}
\begin{lemma}\label{lemm::calcolo2} Let $g$ be as in the previous lemma and let $t,s\in \R^d$.	If $t\neq s$ and $ t\neq -\frac{1}{\nn t} t $,  then
	\begin{subequations}
		\begin{align}
			\label{calcolo1}
			\E[ g(t) D^{i}g(s)] &= \frac{L^i(t,s)}{(1+\nn t) (1+\nn s)^2 }  \kappa^{(1)}\tonde{x_{t,s}}, \\
			\label{calcolo2}
			\E[ D^q g(t) D^i g(s)]  &= \sum_{\ell=1}^2 \frac{\beth^\ell _{qi}(t,s)}{(1+\nn t)^{\ell+1} (1+\nn s)^{\ell+1}} \kappa^{(\ell)}(x_{t,s}),\\
			\label{calcolo3}
			\E[D^qg(t) D^{ji} g(s)] &= \sum_{\ell=1}^3 \frac{\gimel^{\ell}_{qij}(t,s)}{(1+\nn{t})^{\ell+1} (1+\nn s)^{\ell+2}} \kappa^{(\ell)}(x_{t,s}),  \\
			\label{calcolo4}
			\E[D^{pq}g(t) D^{ji} g(s)] &=\sum_{\ell=1}^4 \frac{\daleth^\ell_{pqij}(t,s)}{(1+\nn{t})^{\ell+2}(1+\nn s)^{\ell+2}} \kappa^{(\ell)}(x_{t,s}), \\
			\label{calcolo5}
			\E[ g(t) D^{ji}g(s) ]  &= \sum_{\ell=1}^2 \frac{ C^\ell_{ij}(t,s)}{(1+\nn t)^\ell (1+\nn s)^{\ell+2}}\kappa^{(\ell)}(x_{t,s}).
		\end{align}
	\end{subequations}
	where $\kappa^{(\ell)}$ denotes the $\ell$-th derivative of $\kappa$, and the remaining functions are defined in Appendix~\ref{appB}.
\end{lemma}
Let us prove the two lemmas.
\begin{proof} Let $ f:\R \to \R$ be a function such that
	$$ f(t+he_k,s)  \dot{=} f(t,s) + h \widetilde f(t,s)  $$
	where $\dot{=}$ denotes equality to first order in $h$. Then, for every $\alpha, \beta\in \R$ and $\gamma\in \s{ 1, \dots, d}$ we have
	\begin{align*}  &\Bigg[ \frac{f(\zeta,s) g(x_{\zeta,s})}{(1+\nn {\zeta})^\alpha (1+\nn s)^\beta} \Bigg]_{\zeta = t}^{\zeta = t + he_\gamma} 	\dot{=}\frac{1}{(1+\nn {t+he_\gamma})^\alpha (1+\nn t)^\alpha (1+\nn s)^\beta} \\
		&\quad \cdot \Bigg( \Big(f(t,s) + h\widetilde f(t,s) \Big) g(x_{t+he_\gamma,s}) (1+\nn t)^\alpha
		- f(t,s)g(x_{t,s}) (1+\nn{t+he_\gamma})^\alpha \Bigg)
	\end{align*}
	where for every  function $h:\, \R^2\to \R$,  $[h(\zeta,s)]_{\zeta=x_1}^{\zeta=x_2} $ denotes the difference $h(x_2,s)- h(x_1, s)$.
	Now, we observe that  $$(1+\nn{t+he_\gamma })^\alpha  \dot{=} (1+\nn t)^\alpha +  2\alpha t_\gamma (1+\nn t)^{\alpha-1}$$ and so
	\begin{align*} \Bigg[ \frac{f(\zeta,s) g(x_{\zeta,s})}{(1+\nn {\zeta})^\alpha (1+\nn s)^\beta} \Bigg]_{\zeta = t}^{\zeta = t + he_\gamma}
		\dot{=}\frac{f(t,s)}{(1+\nn {t+he_\gamma})^\alpha  (1+\nn s)^\beta} \Bigg( g(x_{t+he_\gamma ,s})- g(x_{t,s})\Bigg)\\
		+h \frac{\widetilde f(t,s) g(x_{t+he_\gamma,s}) (1+\nn t) - 2\alpha t_\gamma f(t,s) g(x_{t,s}) }{(1+\nn {t+he_\gamma})^\alpha (1+\nn t) (1+\nn s)^\beta}.
	\end{align*}
	In particular,  if $g'$ is continuous in the closed interval with endpoints $x_{t+he_\gamma ,s}$ and $x_{t,s}$, by the mean value theorem we have
	\begin{equation}\label{limt+tneqs}
		\begin{aligned}\lim_{h\to 0 } \frac{1}{h}   \Bigg[ \frac{f(\zeta,s) g(x_{\zeta,s})}{(1+\nn {\zeta })^\alpha (1+\nn s)^\beta} \Bigg]_{\zeta = t}^{\zeta = t + he_\gamma } 	=\frac{f(t,s) g'(x_{\zeta,s}) L^\gamma(s,t)}{(1+\nn {t})^{\alpha+2}  (1+\nn s)^{\beta+1}}\\
			+\frac{\tilde f(t,s) (1+\nn t) - 2\alpha t_\gamma f(t,s)  }{(1+\nn {t})^{\alpha+1} (1+\nn s)^\beta}  g(x_{t,s}),
		\end{aligned}
	\end{equation}
	indeed,
	$$ x_{t+he_\gamma ,s}  - x_{t,s}\dot{=}  h \frac{L^\gamma (s,t)}{(1+\nn t) (1+\nn t+he_\gamma) (1+\nn s)}.$$
	Moreover, if $s=t$, $f(s,s) = 0 $ and $g$ are continuous in $[-1,1]$ then
	\begin{equation}\label{lmitet+t=s}
		\begin{aligned}\lim_{h\to 0 } \frac{1}{h}   \Bigg[ \frac{f(\zeta,s) g(x_{\zeta,s})}{(1+\nn {\zeta})^\alpha (1+\nn s)^\beta}\Bigg]_{\zeta = s}^{\zeta = s + he_\gamma} 	=
			\frac{\tilde f(s,s)  }{(1+\nn {s})^{\alpha+\beta}}  g(1).
		\end{aligned}
	\end{equation}
	One can derive a similar result if $f(t,s+he_\gamma) = f(s,t) + h \widehat{f}(t,s)$.  More precisely,  if $g'$ is continuous in the closed interval with endpoints $x_{t+he_\gamma,s}$ and $x_{t,s}$, then
	\begin{equation}\label{lims+tneqs}
		\begin{aligned}
			\lim_{h\to 0 } \frac{1}{h}   \Bigg[ \frac{f(t,\zeta ) g(x_{t,s})} { (1+\nn t)^\alpha (1+\nn {\zeta})^\beta } \Bigg]_{\zeta = s}^{\zeta = s + he_\gamma}
			=\frac{f(t,s) g'(x_{t,s}) L^\gamma(t,s)}{(1+\nn {t})^{\alpha+1}  (1+\nn s)^{\beta+2}}\\
			+\frac{\widehat f(t,s) (1+\nn s) - 2\alpha s_\gamma f(t,s)  }{(1+\nn {s})^{\beta+1} (1+\nn t)^\alpha}  g(x_{t,s}).
		\end{aligned}
	\end{equation}
	Moreover, if $s=t$, $f(s,s) =0$ and $g$ are continuous in $[-1,1]$ then
	\begin{equation}
		\label{limites+t=s}
		\begin{aligned}\lim_{h\to 0 } \frac{1}{h}   \Bigg[ \frac{f(\zeta,s) g(x_{\zeta,s})}{(1+\nn s)^\alpha(1+\nn {\zeta})^\beta }\Bigg]_{\zeta = s}^{\zeta = s + he_\gamma}
			=
			\frac{\widehat f(s,s)  }{(1+\nn {s})^{\alpha+\beta}}  g(1).
		\end{aligned}
	\end{equation}
	In the following, we shall employ several identities involving the function appearing in the Appendix~\ref{appB}. These can be verified through straightforward computations. For convenience, a program enabling such verifications is available at~\url{https://github.com/simmaco99/CriticalPointRandomNeuralNetworks}.
	\begin{itemize}
		\item[\eqref{calcolo1}] Using~\Cref{tecnico2}, we have
		\begin{align*} \E[ g(t) D^{i}g(s)]   = &\lim_{h\to 0 } \frac{1}{h} \Big(\E[ g(t) g(s+he_i) ] - \E[ g(t)g(s)]\Big) = \lim_{h\to 0 }  \frac 1 h \Bigg[  \kappa(x_{t,\zeta})\Bigg]_{\zeta = s}^{\zeta = s+he_i} .
		\end{align*}
		Since $\kappa\in C^2([-1,1])$, one can use~\eqref{lims+tneqs} with $f(t,s)=1$, $g(t,s) = \kappa(x_{t,s})$,  $\alpha=\beta = 1$ and $\gamma=i$. Since $\widehat f(t,s) = 0$ the claim follows. Moreover, since $L^i(s,s) =0 $, we obtain also~\eqref{calcolo1uguale}.
		\item[\eqref{calcolo2}] Using the same argument as in the previous step and~\eqref{calcolo1}, we have
		\begin{align*} \E[ D^q g(t) D^{i}g(s)]   = \lim_{h\to 0 }  \frac 1 h \Bigg[  \frac{L^i(\zeta,s)}{(1+\nn \zeta) (1+\nn s)^2 }\kappa'(x_{\zeta,s})\Bigg]_{\zeta = t}^{\zeta = t+he_p} .
		\end{align*}
		One can prove that
		$$ L^i(t+he_q, s) \dot{=} L^i(t,s) + h\widetilde L(t,s)$$
		and hence using~\eqref{limt+tneqs} with $\alpha = 1 $, $\beta=2$, $f (t,s) =  L^i(t,s)$,  $g= \kappa'(x_{t,s})$ and $\gamma=p$ we obtained  the  desired identity. On the other hand, since $$\beth_{qi}^1 (s,s) = 4 ( 1+ \nn s) \delta_{iq}, \qquad \beth_{qi}^2(s,s) =0 $$ we have~\eqref{calcolo2uguale}.
		\item[~\eqref{calcolo3}] As in the previous step, using~\eqref{calcolo2}  we have
		\begin{align*} \E[ D^q g(t) D^{ji}g(s)]  =\sum_{\ell=1}^2 \Bigg[ \frac{\beth^\ell _{qi}(t,\zeta)}{(1+\nn t)^{\ell+1} (1+\nn \zeta)^{\ell+1}} \kappa^{(\ell)}(x_{t,\zeta}) \Bigg]_{\zeta=s}^{\zeta = s+ h e_j} .
		\end{align*}
		Now, one can prove that
		$$ \beth^\ell _{qi}(t,s+he_j)  \dot{ =}  \beth^\ell _{qi}(t,s)  + \widehat{\beth^\ell _{qij}}(t,s) , \qquad \ell = 1,2.$$
		If $x_{t,s}\neq 1$ then we can apply~\eqref{lims+tneqs} and hence
		\begin{align*} \E[ D^q g(t) D^{ji}g(s)]  =&\sum_{\ell=1}^2 \Bigg[ \frac{\beth_{qi}^\ell(t,s)L^j(t,s)  \kappa^{(\ell+1)}(x_{t,s})  }{(1+\nn {t})^{\ell+2}  (1+\nn s)^{\ell+3}}  \\
			&+\frac{\widehat{\beth_{qij}^\ell} (t,s) (1+\nn s) - 2(\ell+1) s_j \beth_{qi}^\ell(t,s) }{(1+\nn t)^{\ell+1} (1+\nn {s})^{\ell+2} }  \kappa^{(\ell)}(x_{t,s}))\Bigg] \\
			=&\sum_{\ell=2}^3  \frac{\beth_{qi}^{\ell-1}(t,s) L^j(t,s)  \kappa^{(\ell)}(x_{t,s})  }{(1+\nn {t})^{\ell+1}  (1+\nn s)^{\ell+2}} \\
			& +\sum_{\ell=1}^2\frac{\widehat{\beth_{qij}^\ell} (t,s) (1+\nn s) - 2(\ell+1) s_j \beth_{qi}^\ell(t,s) }{(1+\nn t)^{\ell+1} (1+\nn {s})^{\ell+2} }  \kappa^{(\ell)}(x_{t,s}))
		\end{align*}
		This yields the desired identity in a straightforward way.  We note that  $	\beth_{qi}^2(s,s) =0$, so one can apply~\eqref{limites+t=s} instead~\eqref{lims+tneqs}. After  computing  $\beth_{qi}^\ell (s,s)$ and $\widehat{\beth_{qi}^\ell }(s,s)$, we obtain~\eqref{calcolo3uguale}.
		\item[~\eqref{calcolo4}]As above, using~\eqref{calcolo3} we obtain
		\begin{align*} \E[ D^{pq} g(t) D^{ji}g(s)]  =\sum_{\ell=1}^3 \Bigg[ \frac{\gimel^\ell _{qij}(\zeta,s)}{(1+\nn \zeta )^{\ell+1} (1+\nn s)^{\ell+2}} \kappa^{(\ell)}(x_{t,\zeta}) \Bigg]_{\zeta=s}^{\zeta = s+ h e_j} .
		\end{align*}
		Now, one can check that
		$$ \gimel^\ell _{qij}(t+he_p,s)  \dot{ =}  \gimel^\ell _{qij}(t,s)  + \widetilde{\gimel^\ell _{qpij}}(t,s) , \qquad \ell = 1,2,3  . $$
		Since $\kappa\in C^\infty((-1,1)$ then, for $t\neq s $ and $t \neq - \frac{1}{\nn  s}  s$,
		\begin{align*} \E[ D^{pq} g(t) D^{ji}g(s)]  =&\sum_{\ell=1}^3 \Bigg[ \frac{\gimel^\ell _{qij}(t,s) L^p(t,s)}{(1+\nn t )^{\ell+3} (1+\nn s)^{\ell+3}} \kappa^{(\ell+1)}(x_{t,s}) \\
			&+  \frac{\widetilde{\gimel^\ell_{qpij}}(t,s) (1+\nn t) - 2(\ell+1) t_p \gimel^\ell_{qij }}{(1+\nn {t})^{\ell+2} (1+\nn s)^{\ell+2}}  \kappa^{(\ell)}(x_{t,s}) \Bigg]\\
			=&\sum_{\ell=2}^4 \frac{\gimel^{\ell-1} _{qij}(t,s) L^p(t,s)}{(1+\nn t )^{\ell+2} (1+\nn s)^{\ell+2}} \kappa^{(\ell)}(x_{t,s}) \\
			&+\sum_{\ell=1}^3  \frac{\widetilde{\gimel^\ell_{qpij}}(t,s) (1+\nn t) - 2(\ell+1) t_p \gimel^\ell_{qij }}{(1+\nn {t})^{\ell+2} (1+\nn s)^{\ell+2}}  \kappa^{(\ell)}(x_{t,s})\\
			= & \frac{\widetilde{\gimel^1_{qpij}}(t,s) (1+\nn t) - 4 t_p \gimel^1_{qij }}{(1+\nn {t})^{3} (1+\nn s)^{3}}  \kappa^{(1)}(x_{t,s}) +  \frac{\gimel^{3} _{qij}(t,s) L^p(t,s) \kappa^{(4)}(x_{t,s}) }{(1+\nn t )^{6} (1+\nn s)^{6}} \\
			& + \sum_{\ell=2}^3 \frac{\gimel^{\ell-1} _{qij}(t,s) L^p(t,s)+ \widetilde{\gimel^\ell_{qpij}}(t,s) (1+\nn t) - 2(\ell+1) t_p \gimel^\ell_{qij }}{(1+\nn {t})^{\ell+2} (1+\nn s)^{\ell+2}}  \kappa^{(\ell)}(x_{t,s}) .
		\end{align*}
		Moreover, $\gimel_{qij}^2(s,s) = \gimel_{qij}^3(s,s) =0 $. After the computation of $\gimel_{qpij} ^1(s,s)$, $\widetilde{\gimel_{qpij} ^\ell}(s,s)$ for $\ell = 1,2,3$ we have~\eqref{calcolo4uguale}.
		\item[~\eqref{calcolo5}] Using~\Cref{tecnico2} and~\eqref{calcolo1} we have
		\begin{align*}
			\E[ g(t) D^{ji} g(s) ] = \lim_{h \to 0 } \Bigg[  \frac{L^i(t,\zeta)}{(1+\nn t) (1+\nn \zeta)^2}\Bigg]_{\zeta = s}^{\zeta=s+he_j} .
		\end{align*}
		The claim follows from~\eqref{lims+tneqs}.
	\end{itemize}
\end{proof}

\subsection{Proof of~\ref{it:ii}:  seconds derivatives exist almost surely}\label{B2}
The triple $(D^i f(s), D^i f(t), D^if(s))$ is a (multivariate) Gaussian random vector. Indeed, for all $a_1,a_2, a_3\in \R$, the linear combination $ a_1 D^i f(s) + a_2 D^i f(t)+ a_3 D^{ji} f(s) $ is the $L^2$ limit of Gaussian random variables.  Set
$$ \sigma_n^2 :=  \Var(n (D^i f(s+e_j /n) - f(s)) -D^{ji}f(s)) , $$
then the random variable $n (D^if(s+e_i/n) - D^if(s)) -D^{ji} f(s)$ has the same distribution as $\sigma_n Z$, where $Z\sim \mathcal N(0,1)$. Hence,
\begin{align*}\label{bc}
	\P\tonde{| n (D^i f(s+e_i/n) - D^i f(s)) -D^{ji} f(s)|>\varepsilon
	} \leq  2 \P\tonde{Z> \frac{\varepsilon}{\sigma_n}}  \leq \frac{2\sigma_n}{\sqrt{2\pi}\varepsilon} \mathrm e^{-\frac{\varepsilon^2}{2\sigma_n^2}}  .
\end{align*}
If we show that $ \sigma_n ^2 = O(n^{-\zeta}) $ for some $\zeta> 0$, then~\eqref{somma} follows.  For notational convenience, we set  $\frac{1}{n} = h$. Using~\Cref{lem::calcolo1} and~\Cref{lemm::calcolo2}
\begin{align*}
	h^2	\sigma_{1/h}^2 =& \E\Big[ (D^i f(s+ he_j) - D^i f(s))^2 \Big] + h^2\E[D^{ji}f(s)^2]\\
	&- 2h \E[ D^{ji}f(s) (D^i f(s+ he_j) - D^i f(s))]\\
	=&  \kappa'(1)\Bigg( \frac{4}{(1+\nn{s+he_j})^2} + \frac{4}{(1+\nn s)^2}  -\frac{16hs_j}{(1+\nn s)^3}\\
	&\quad +\frac{h^2((1+\nn s)\delta_{ij} - 2\delta_{ij}s_i s_j + s_i^2 + s_j^2)}{(1+\nn s)^4} \Bigg)  + \kappa''(1) \frac{16h^2 ( 1 + 2\delta_{ij})}{(1+\nn s)^4}\\
	& -2\kappa'(x_{t,s}) \Bigg( \frac{ \beth_{ii}^1(s+he_j,s)}{(1+\nn s)^{2}(1+\nn{s+he_j})^2} + \frac{h\gimel_{iij}^1(s+he_j,s)}{(1+\nn{s+he_j})^2 (1+\nn s)^3}\Bigg) \\
	& -2 \kappa''(x_{t,s})\Bigg( \frac{ \beth_{ii}^2(s+he_j,s)}{(1+\nn s)^{3}(1+\nn{s+he_j})^3} + \frac{h\gimel_{iij}^2(s+he_j,s)}{(1+\nn{s+he_j})^3 (1+\nn s)^4}\Bigg)\\
	& - 2 \kappa'''(x_{t,s}) \frac{h\gimel_{iij}^3(s+he_j,s)}{(1+\nn{s+he_j})^4 (1+\nn s)^5} .
\end{align*}
Using the definition of CRI and some algebraic manipulations,
	\begin{align*}
		h^2 \sigma_{1/h} =  &  \frac{2\kappa'(1) r_1(h)}{(1+\nn s)^4 (1+\nn{s+he_j})^2}+ \frac{2\kappa''(1) r_2(h)}{(1+\nn s)^4(1+\nn{s+he_j})^3} \\
		& \quad +\frac{ c\beta 2^{\beta-2} h^{2\beta-5} r_\beta(h)}{(1+\nn s)^{\beta+2} (1+\nn{s+he_j})^{\beta+1}}  + o( r(h;\beta))  .
	\end{align*}
	where $r_1(h), r_2(h), r_\beta(h)$, and $r(h;\beta)$ are given in Appendix~\ref{appB2}.
	In the same appendix, we also prove that,  as $h \to 0 $,  the following asymptotics hold: \begin{align*}
		r_1(h) = R_1 h^4 + O(h^5),\\
		r_2(h) = R_2 h^4 + O(h^5),\\
		r_3 (h) = R_\beta h^3 + O(h^4),\\
		r(h) = O(h^{2(\beta-2)})
	\end{align*}
	where $R_k\neq 0$ is independent of $h$.
	Define
	$$ R(\beta) = \frac{2\kappa'(1) (1+\nn{s}) R_1 + 2 \kappa''(1) R_2}{(1+\nn s)^7} \mathbb I_{[3, \infty)} (\beta ) +\frac{ c\beta 2^{\beta-2} R_\beta }{(1+\nn s)^{2\beta+3} }  \mathbb I_{(2,3]} (\beta)  , $$
	and
	$$ \rho(\beta) = 1 \wedge (\beta-2)$$
	we obtain that
	$$ \sigma_{1/n}^2 = R(\beta) n^{-2\rho(\beta)}  + o(n^{-2\rho(\beta)})  . $$
	Recalling that  $\beta>2$, thus setting$\zeta = 2\rho(\beta)$ the claim follows.
	\subsection{Proof of~\ref{it:iii}: seconds derivatives are H\"older continuous}\label{A4}
	Since $\Big(D^{ji} g(t), D^{ji}g(s)\Big) $ is a  multivariate Gaussian random vector,
	$$ \E\quadre{\abs{  D^{ji} g(t) - D^{ji}g(s)}^{2k}} =  \E\quadre{|   D^{ji} g(t) - D^{ji}g(s)|^2}^k (2k-1)!!.$$
	It suffices to prove that
	$$ \E\quadre{|D^{ji} g(t) - D^{ji}g(s)|^2} \leq C \n{t-s}^{2\rho(\beta)} . $$
	Now, using~\Cref{lem::calcolo1} and~\Cref{lemm::calcolo2}, the definition of $\mathrm{CRI}$ and some simple algebraic manipulations, we have
	\begin{align*}
		\E\quadre{\abs{D^{ji} g(t) - D^{ji}g(s)}^2}=&
		2\frac{\kappa'(1) H_1 (t,s)   + \kappa''(1) H_2(t,s)}{(1+\nn s)^4(1+\nn t )^4} -2 \frac{H_3 (t,s,\beta)}{(1+\nn t)^{\beta+2} (1+\nn s)^{\beta+2}} \\
		& \quad + o \tonde{  \frac{H_3 (t,s,\beta)}{(1+\nn t)^{\beta+2} (1+\nn s)^{\beta+2}} }  \\
	\end{align*}
	where $H_k $, $k=1, \dots, 3$ are given in Appendix~\ref{appB3}. In the same appendix, we prove that, as $\nn{t-s} \to 0 $ then
	$$H_k(s,t) \leq C_k (1+\nn s)^4 (1+\nn t )^4\nn{t-s} ( 1 + o(1))\qquad k =1,2$$
	and
	$$ H_3(s,t) \leq C_3( 1 +\nn s)^4 ( 1 + \nn t)^4 \n{t-s}^{2(\beta-2)} ( 1 + o(1)) $$
	where $C_k$ are universal constants. Thus the claim follows.
	\section{Generalization of~\cite[Lemma 4.1]{cheng2018expected} }\label{riemann}
	In this section, we show that~\cite[Lemma 4.1]{cheng2018expected}  continues to hold  under the assumption $\mathrm{CRI}>2$.

	Let $(E_1,\dots,E_d)$ denote an orthonormal frame on $\S^d$, i.e., a family of smooth vector fields forming an orthonormal basis of each tangent space with respect to the Riemannian metric. For a smooth function $f$ on $\S^d$, we write $E_i f$ to denote the derivative of $f$ in the direction of $E_i$.

	The second derivatives are defined via the Levi-Civita connection $\nabla$. Namely, the Hessian of $f$ is given by
	\[
	E_{ij} f:= E_i(E_j f) - \nabla_{E_i} E_j f,
	\]
	where
	\[
	\nabla_{E_i} E_j:= \sum_{k=1}^d \theta_{ij}^k E_k,
	\qquad
	\theta_{ij}^k:= g(\nabla_{E_i} E_j, E_k),
	\]
	and $g$ denotes the Riemannian metric on $\S^d$.

	\begin{lemma} \label{lemma_gen}
		Let  $T$ be an isotropic Gaussian random field on $\S^d$ with $\mathrm{CRI}>2$ and covariance function $\kappa$. Then for every point $p\in \S^d$,
		\begin{subequations}
			\begin{gather}
				\label{p1}\E[  E_i T(p) T(p)] = 0,\\
				\label{p2}\E[ E_i T(p) E_j T(p) ] = \kappa'(1) \delta_{ij},\\
				\label{p3} \E[ E_{ij} T(p) T(p)]  = - \kappa'(1) \delta_{ij},\\
				\label{p4} \E[E_{ij}T(p) E_{q} T(p) ] = 0,\\
				\label{p5} \E[E_{ij}T(p) E_{kq}T(p)] = \kappa'(1) \delta_{ij}\delta_{kq} + \kappa''(1) \Big( \delta_{ij}\delta_{kq} + \delta_{ik } \delta_{jq} + \delta_{iq} \delta_{jk}\Big).
			\end{gather}
		\end{subequations}
	\end{lemma}
	\begin{proof} Let $\varphi_N:\S^d \setminus \{N\}\to\R^d$ denote the stereographic projection from the north pole, one can construct a natural orthonormal frame $(E_1,\dots,E_d)$ by setting
		\[
		(E_i T)(p):= \Omega(\varphi_N(p)) \, D^i f_N(\varphi_N(p)),
		\]
		where $f:= T \circ \varphi_N^{-1}$,
		\[
		D^i f_N(s):= \left.\frac{\partial f_N}{\partial x_i}(x)\right|_{x=s},
		\qquad
		\Omega(x):= \frac{1+\|x\|^2}{2}.
		\]
		Using Koszul's formula and the fact that the frame is orthonormal, one obtains
		\[
		2\theta_{ij}^k = g(E_k,[E_i,E_j]) - g(E_j,[E_k,E_i]) - g(E_i,[E_k,E_j]),
		\]
		where the Lie bracket is defined by
		\[
		[X,Y]T:= X(YT) - Y(XT).
		\]
		A straightforward computation shows that
		\[
		[E_i,E_j] T(p) = \varphi_N(p)_i \, E_j - \varphi_N(p)_j \, E_i,
		\]
		where $\varphi_N(p)_j$ denotes the $j$-th component of the vector $\varphi_N(p)$. Hence,
		\begin{align}\label{def_theta}
			\theta_{ij}^k(x) = x_k \delta_{ij} - x_j \delta_{ik}.
		\end{align}
		In conclusion, we have the explicit expression for the Hessian:
		\begin{equation}\label{hess}
			E_{ij} T(p) = E_i(E_j T)(p) - \sum_{k=1}^d \theta_{ij}^k(\varphi_N(p)) \, E_k T(p).
		\end{equation}

		To simplify the notation, let $t = \varphi_N(p)$.
		\begin{itemize}
			\item  [\eqref{p1}]
			Since $$ \E[ E_i T (p) T(p)] = \Omega(t) \E[ D^i f_N(t) f_N(t)]   , $$
			using~\eqref{calcolo1uguale} we obtain the claim.
			\item[\eqref{p2}] Since
			$$ \E[ E_i T(p) E_j T(p) ]   = \Omega(t)^2 \E[ D^i f_N(t)  D^j  f_N(t)] $$
			the claim follows by~\eqref{calcolo2uguale}.
			\item [\eqref{p3}]  Using the explicit expression for the Hessian~\eqref{hess},  we obtain
			\begin{align*}
				\E[E_{ij}T(p) T(p)]	 = \E[ E_i(E_j T)(p) T(p)]- \sum_{k=1}^d \theta_{ij}^k  \E[ E_k T(p) T(p)].
			\end{align*}
			Now, using~\eqref{p1}, we can rewrite the previous identity in this way
			$$\E[E_{ij}T(p) T(p)]	 = \E[ E_i(E_j T)(p) T(p)].$$
			Since,
			\begin{equation}\label{hess2}
				\begin{aligned} E_i (E_j T)(p)  = &\Omega(t) \frac{\partial}{\partial t_i} \Big( \Omega(t)  \frac{\partial}{\partial t_j} f_N(t)\Big)  \\
					= &\Omega(t)^2 \frac{\partial^2}{\partial t_i \partial t_j} f_N(t) + t_i \Omega(t)  \frac{\partial}{\partial t_j} f_N(t) \\
					=&   \Omega(t)^2 \frac{\partial^2}{\partial t_i \partial t_j} f_N(t) + t_i  E_j T(p)
				\end{aligned}
			\end{equation}
			we obtain
			\begin{align*} \E[ E_{ij} T(p) T(p)]   = & \Omega(t)^2 \E\Big[D^{ij}f_N(t) f(t) \Big]
				+ t_i \E[ E_j T(p) T(p) ] .
			\end{align*}
			The claim following from~\eqref{calcolo5uguale} and~\eqref{p1}.

			\item[\eqref{p4}]Combining~\eqref{hess}  with~\eqref{hess2} we have,
			\begin{align*}
				\E[ E_{ij} T(p) E_q T(p)]   =  & \Omega(t)^3 \E\Big[D^{ij}f_N(p)D^q f(p) \Big]   + t_i \E[ E_j T(p)E_q T(p)]  \\
				&- \sum_{k=1}^d \theta_{ij}^k (t) \E[ E_k T(p) E_q T(p)]  .
			\end{align*}
			Using~\eqref{calcolo2uguale} and~\eqref{p2}  we obtain
			\begin{align*}
				\E[ E_{ij} T(p) E_q T(p)]   =  & \kappa'(1) \Bigg[t_q \delta_{ij} -  t_i   \delta_{qj}  - t_j \delta_{iq} + t_i \delta_{jq}  -\theta_{ij}^q (t)\Bigg]  .
			\end{align*}
			The claim follows from the definition of $\theta_{ij}^k $ (cf.,~\eqref{def_theta}).
			\item[\eqref{p5}]Using~\eqref{hess} we obtain
			\begin{align*}&\E[ E_{ij} T(p) E_{kq} T(p)] =\E[ E_i(E_j T)(p)] \\
				& \quad -\sum_{r=1}^d \Big( \theta_{ij}^r (s)\E[E_r T(p) E_k(E_q T)(p) ] +  \theta_{kq}^r(s) \E[E_r T(p) E_{i}(E_j T)(p)] \Big) \\
				& \quad + \sum_{r,\ell =1}^d  \theta_{ij}^r(s)\theta_{kq}^\ell(s) \delta_{r\ell} .
			\end{align*}
			Using~\Cref{lem::calcolo1} and the previous computations, one can prove that
			$$  \E[E_r T(p) E_{i}(E_j T)(p)] = \theta_{ij}^r(s) $$
			and
			\begin{align*}\E[E_i(E_jT)(p)E_p(E_q T)(p)] &= \kappa'(1)\Big( 1 + \nn{s}) \delta_{i j} \delta_{p q} - \delta_{i j} {s}_{p} {s}_{q} + \delta_{i p} {s}_{j} {s}_{q} - \delta_{p q} {s}_{i} {s}_{j}\Big)\\
				&+
				\kappa''(1) \Big(\delta_{i j} \delta_{p q} + \delta_{i p} \delta_{j q} + \delta_{i q} \delta_{j p}\Big) .
			\end{align*}
			The claim follows,nothing that
			$$ \sum_{r=1}^d \theta_{ij}^r \theta_{pq}^r = \nn s  \delta_{i j} \delta_{p q} - \delta_{i j} {s}_{p} {s}_{q} + \delta_{i p} {s}_{j} {s}_{q} - \delta_{p q} {s}_{i} {s}_{j} .   $$
		\end{itemize}
	\end{proof}
	\section{Additional details}\label{appC}
	In this section, we provide further details on the proof in Appendix~\ref{appA}.We also explain how to use the symbolic computation program (publicly available at~\url{https://github.com/simmaco99/CriticalPointRandomNeuralNetworks}).Throughout this appendix, we use the following abbreviations:
	$$ S:= 1 + \nn s, \qquad T : = 1 + \nn t , \qquad D := \n{t-s} . $$
	Moreover, for each $k\in\{1,\dots,d\}$, we set $d_k:=t_k-s_k$.
	\newpage
	\subsection{Details for the proof of~\Cref{lemm::calcolo2}}
	\label{appB}
	\begin{align*}
		L^i(t,s):=&4  D^2 s_i + 4  Sd_i, \\
		\beth_{qi}^{1}(t,s):=&\widetilde{L_q^i}(t,s)  T- 2  t_q L_i(t,s) , \\
		\beth_{qi}^{2}(t,s):=&L^i(t,s)  L^q(s,t) , \\
		\gimel_{qij}^{1}(t,s):=&\widehat{\beth_{qij}^{1}}(t,s)S- 4  s_j \beth_{qi}^{1}(t,s) , \\
		\gimel_{qij}^{2}(t,s):=&\beth_{qi}^{1}(t,s)  L_j(t,s) + \widehat{\beth_{qij}^{2}}(t,s) S - 6s_j  \beth_{qi}^{2}(t,s) , \\
		\gimel_{qij}^{3}(t,s):=&\beth_{qi}^{2}(t,s)  L_j(t,s) ,  \\
		\daleth_{qpij}^1(t,s):=& \widetilde{\gimel^1_{qpij}}(t,s) (1+\nn t) - 4 t_p \gimel^1_{qij }(t,s) , \\
		\daleth_{qpij}^\ell(t,s) :=& \gimel^{\ell-1} _{qij}(t,s) L^p(s,t)+ \widetilde{\gimel^\ell_{qpij}}(t,s) T- 2(\ell+1) t_p \gimel^\ell_{qij }(t,s) , \quad \ell = 2,3 ,   \\
		\daleth_{qpij}^4(t,s) :=&\gimel^{3} _{qij}(t,s) L^p(s,t),  \\
		C_{ij}^1 (t,s):= & \widetilde{L_j^i}(t,s) S- 4 L^i(t,s) ,\\
		C_{ij}^2(t,s):=& L^i(t,s) L^j(t,s) , \\
		\widetilde{L_q^i}(t,s) :=& 4S  \delta_{iq}  +8s_i d_q , \\
		\widehat{L_q^i}(t,s) := &4  (-2s_i d_q+ \delta_{iq}D^2+ 2s_q d_i - \delta_{iq}S) , \\
		\widehat{\beth_{qij}^{1}}(t,s) :=&8 T  (d_q \delta_{ij}-s_i  \delta_{jq}  + s_j\delta_{iq}) - 2t_q \widehat{L_j^i}(t,s) , \\
		\widehat{\beth_{qij}^{2}}(t,s) :=&L_i(t,s)  \widetilde{L_j^q}(s,t) + \widehat{L_j^i}(t,s)  L_q(s,t) , \\
		\widetilde{\gimel_{qpij}^{1}}(t,s):= &SM_{qpij}^{1}(t,s) - 4  s_j  N_{qpij}^{1}(t,s) ,  \\
		\widetilde{\gimel_{qpij}^{2}}(t,s):= &N_{qpij}^{1}(t,s)  L_j(t,s) + \widetilde{L_p^j}(t,s)  \beth_{qi}^{1}(t,s) + M_{qpij}^{2}(t,s)  S- 6  s_j  N_{qpij}^{2}(t,s) ,  \\
		\widetilde{\gimel_{qpij}^{3}}(t,s):=  &N_{qpij}^{2}(t,s)  L_j(t,s) + \beth_{qi}^{2}(t,s)  \widetilde{L_p^j}(t,s) , \\
		M_{qpij}^1:=&8\delta_{ij}\delta_{pq} ( S+T-D) + 16  t_q ( \delta_{ij}s_p - \delta_{ip}s_j + \delta_{jp}s_i) ,  \\
		&+ 16\delta_{pq}( s_it_j  - s_jt_i)+16t_p( - \delta_{ij}s_q + \delta_{iq}s_j - \delta_{jq}s_i) ,  \\
		M_{qpij}^2 :=&\widetilde{L_p^i}(t,s)  \widetilde{L_j^q}(s,t) +L_i(t,s)   8 ( -\delta_{jp}t_q + \delta_{jq}t_p \delta_{pq}d_j),   \\
		&+ 8(- \delta_{ij}s_p + \delta_{ij}t_p + \delta_{ip}s_j - \delta_{jp}s_i)  L_q(s,t)+ \widehat{L_j^i}(t,s)  \widehat{L_p^q}(s,t) , \\
		N_{qpij}^1:= &8s_i \delta_{pq}  T + 2  t_p  \widetilde{L_q^i}(t,s) - 2  \delta_{pq}  L_i(t,s) - 2  t_q  \widetilde{L_p^i}(t,s) ,  \\
		N_{qpij}^2:= & L_i(t,s)  \widehat{L_p^q}(s,t) + \widetilde{L_p^i}(t,s)  L_q(s,t) .
	\end{align*}
	To verify the first-order expansions of these functions, one can use the function \texttt{FirstOrderExpansion}. More precisely,
	\begin{itemize}
		\item If \texttt{FirstOrderExpansion(L(i),tildeL(q,i),True,q)} returns \texttt{True} then
		$$ L^i(t+he_q, s) \dot{=} L^i(t,s) + h\widetilde{L^i_q}(t,s)$$
		holds.
		\item If, for all $\ell\in\s{1,}2$,  \texttt{FirstOrderExpansion(beth(q,i,$\ell$),hatbeth(q,i,j,$\ell$),False,j)} returns \texttt{True},  then
		$$ \beth_{qi}^\ell  (t,s+he_j ) \dot = \beth_{qij}^\ell (t,s)  + h \widehat{\beth_{qij}^1}(t,s)  \qquad \ell=1,2.$$
		\item If, for all $\ell \in\s{1,2,3}$\\texttt{FirstOrderExpansion(gimel(q,i,j,$\ell$),tildegimel(q,p,i,j,$\ell$),True,p)} returns \texttt{True},  then
		$$ \gimel _{qij}^\ell  (t+he_p,s ) \dot = \gimel _{qij}^\ell  (t,s) + h \widetilde{\gimel_{qpij}^\ell}(t,s) \qquad \ell=1,2,3.$$

	\end{itemize}

	To compute the value of an expression when $t=s$, we  use the function \texttt{ValueOrigin}.
	\subsection{Details for the proof of~\ref{it:ii}}\label{appB2}
	Define
	$$ \Delta_\ell (h):= \Big( \beth_{ii}^k(s+he_j,s)(1+\nn s)  + h\gimel_{iij}^\ell(s+he_j,s)\Big)\qquad \ell=1,2.$$
	we can then  write
	\begin{align*}
		r_1(h) = &	2S^4+ 2S^2 (1+\nn{s+he_j})^2 + 8h^2 \Big(S\delta_{ij} - 2\delta_{ij}s_i s_j + s_i^2 + s_j^2\Big)(1+\nn{s+he_j})^2 \\
		& -8hs_jS(1+\nn{s+he_j})^2   -S\Delta_1(h),
	\end{align*}
	\begin{align*}
		r_2(h) = & 8h^2 ( 1 + 2\delta_{ij}) (1+\nn{s+he_j})^3 + 2h^2 \Delta_1(h) - \Delta_2(h),
	\end{align*}
	\begin{align*}
		r_\beta(h) =  &-4h^3 \Delta_1(h) + 2(\beta-1)h \Delta_2(h) - (\beta-1)(\beta-2) \gimel_{iij}^3(s+he_j,s) ,
	\end{align*}
	and
	\begin{align*} r(h;\beta) =& h^{2\beta-2} \Delta_1(h)- h^{2\beta-4}\Delta_2(h)+ h^{2\beta-5}\gimel_{iij}^3(s+he_j,s) .
	\end{align*}

	Using the function \texttt{simplify}, we can show that
	\begin{align*}r_1 (h)  = &h^{6} \cdot \left(8 S \delta_{i j} - 16 \delta_{i j} {s}_{i}^{2} + 8 {s}_{i}^{2} + 8 {s}_{j}^{2}\right) + h^{5} \cdot \left(32 S \delta_{i j} {s}_{i} - 8 S {s}_{j} - 64 \delta_{i j} {s}_{i}^{3} + 32 {s}_{i}^{2} {s}_{j} + 32 {s}_{j}^{3}\right)\\
		& + h^{4} \cdot \left(16 S^{2} \delta_{i j} + 2 S^{2} + 16 S {s}_{i}^{2} - 16 S {s}_{j}^{2} - 64 \delta_{i j} {s}_{i}^{4} + 32 {s}_{i}^{2} {s}_{j}^{2} + 32 {s}_{j}^{4}\right) ,
	\end{align*}
	\begin{align*}
		r_2(h) =& h^{8} \cdot \left(16 \delta_{i j} + 8\right) + h^{7} \cdot \left(96 \delta_{i j} {s}_{i} + 48 {s}_{j}\right) + h^{6} \cdot \left(48 S \delta_{i j} + 24 S + 192 \delta_{i j} {s}_{i}^{2} + 96 {s}_{j}^{2}\right) \\
		&+ h^{5} \cdot \left(320 S \delta_{i j} {s}_{i} + 64 S {s}_{j} - 256 \delta_{i j} {s}_{i}^{3} + 192 {s}_{i}^{2} {s}_{j} + 64 {s}_{j}^{3}\right)\\
		& + h^{4} \cdot \left(128 S^{2} \delta_{i j} + 16 S^{2} - 128 S \delta_{i j} {s}_{i}^{2} + 96 S {s}_{i}^{2} + 32 S {s}_{j}^{2}\right),
	\end{align*}
	and
	\begin{align*}
		\Delta_1(h) = 4 S^3 + O (h^3),\\
		\Delta_2(h) = (16S^3\delta_{ij} + 16 S^3) h^2 + O (h^3),\\
		\gimel_{iij}^3(s+he_j,s) = -64 S^3 h^3 \delta_{ij}  + O(h^4) .
	\end{align*}
	Thus,
	$$ r_\beta (h) = 32(\beta-1)(2\beta-3)(1+\delta_{ij}) S^3 h^3+ O(h^4)$$
	and $r(h;\beta) = O(h^{2\beta-2})$.
	\subsection{On the proof of~\ref{it:iii}}\label{appB3}
	\begin{align*} H_1 (s,t) = &8\Big(S\delta_{ij} - 2 \delta_{ij} s_i^2 + s_i^2 + s_j^2\Big)S^4  + 8 \Big(T\delta_{ij} - 2 \delta_{ij} t_i^2 + t_1^2 +  t_j^2\Big)S^4  -\daleth_{ijij}^1 (s,t)TS,
		& \\
		H_2(s,t) = &  8 (1+2\delta_{ij})\Big( T^4 + S^4 \Big)  + 2D \daleth_{ijij}^1 (t,s) - \daleth_{ijij}^2(t,s)\Big), \\
		H_3(s,t) = & \sum_{k=1}^4 2^{\beta-k} \frac{\beta!}{(\beta-k)!}\n{t-s}^{2(\beta-k)} \daleth_{ijij}^k(t,s)  .
	\end{align*}
	Moreover, we obtain
	\begin{align*}H_1(s,t)=& 8\delta_{ij}\Big( ST ( T-S) ( T^2-S^2) + 4 S^2 T^2  d_i^2 - 2  (S-T) ( S^3  t_i^2 - T^3  s_i^2) +2ST  ( St_i - Ts_i)^2 \Big)\\
		&- 64 STd_is_j t_j ( T s_i - S t_i ) -64ST d_j s_i  t_i ( T s_j - S t_j ) \\
		&+ 8( T^2  s_i - S^2  t_i)^2 + 8( T^2  s_j- S^2  t_j)^2\\
		& +8D^2 ST\delta_{ij} \Big(ST - 4 S  {t}_{i}^{2} - 4 T {s}_{i}^{2}\Big)  + 128 D^2 ST16 {s}_{i} {s}_{j} {t}_{i} {t}_{j}.
	\end{align*}
	Hence $$H_1\lesssim \nn{t-s} (1+\nn t)^4 (1+\nn s)^4$$ . Here,  and in what follows, the notation $f\lesssim g$   means that there exists a universal constant $C>0$ such that $f\leq C g $.  This computation also shows that \begin{equation}\label{boundd1}
		\daleth_{ijij}^1(s,t)\lesssim  (1+\nn t)^4 (1+\nn s)^4 \sum_{k\in \s{0, 2} } \n{t-s}^k .
	\end{equation} Moreover,  as $\nn{t-s} \to 0$, we obtain
	\begin{align*}H_2 (s,t) = &16 D^2 \delta_{ij}\Big(S^{2} T - 8 S^{2} {t}_{i}^{2} + S T^{2} + 10 S T {d}_{i} {s}_{i} - 8 S T {d}_{i} {t}_{i} - 2 S T {s}_{i}^{2} + 18 S T {s}_{i} {t}_{i} - 4 S T {t}_{i}^{2} - 8 T^{2} {s}_{i}^{2}
		\Big)\\
		&+ 32D^2 \Big(3 S T {s}_{i} {t}_{i} + 3 S T {s}_{j} {t}_{j} - 20 S {d}_{i} {s}_{j} {t}_{i} {t}_{j} - 20 S {d}_{j} {s}_{i} {t}_{i} {t}_{j} + 20 T {d}_{i} {s}_{i} {s}_{j} {t}_{j} + 14 T {d}_{j} {s}_{i} {s}_{j} {t}_{i}\\
		& \quad  - 6 T {s}_{i} {s}_{j}^{2} {t}_{i} + 6 T {s}_{i} {s}_{j} {t}_{i} {t}_{j}
		\Big)
		\\&+ 8 \Big(
		(S^2-T^2)^2
		+ 8ST d_i  (S t_i -T s_i) +   8ST d_j  (S t_j -T s_j)
		+ 16 STd_id_j s_j t_i
		\\&\quad -16 d_id_j  ( 3T^2 s_is_j+ 3 S^2 t_it_j  -ST(2 s_i  t_j + s_jt_i )) \
		+ ST 32 d_i^2 s_j  t_j +32 STd_j^2s_it_i
		\Big) \\
		& + 16 \delta_{ij} \Big((S^2-T^2)^2 +4 ST d_i^2  (S + T ) +16 STd_i  (St_i - Ts_i)\Big)  +\widetilde{H_2}(t,s)+  2 D^2\daleth_{ijij}^1(t,s) .
	\end{align*}
	Using the function \texttt{BrutalBound} again, we have
	$$\widetilde H_2(t,s) \lesssim (1+\nn t)^3 (1+\nn s)^3 \sum_{k\in \s{ 2,4} } \n{t-s}^k.$$
	By~\eqref{boundd1} we obtain
	$$ H_2 (s,t)\lesssim  (1+\nn t)^4 (1+\nn s)^4 \sum_{k\in \s{ 2,4} } \n{t-s}^k $$
	and
	$$	\daleth_{ijij}^2(s,t)\lesssim   (1+\nn t)^4 (1+\nn s)^4 \sum_{k\in \s{ 0, 2,4} } \n{t-s}^k  . $$
	Using, again, the function \texttt{BrutalBound}, we have that
	$$\gimel^3_{ijij}(t,s) \lesssim (1+\nn t)^3 (1+\nn t)^3  \sum_{k=2}^6 \n{t-s}^k,$$
	$$\gimel^4_{ijij}(t,s) \lesssim (1+\nn t)^2 (1+\nn t)^2 \sum_{k=4}^8 \n{t-s}^k. $$
	Thus
	$$ H_3 \lesssim (1+\nn  t)^4 (1+\nn  s)^4 \sum_{k=0}^4 \n{t-s}^{2\beta -k}. $$

\end{document}